\newtheorem{theorem}{Theorem}
\newtheorem{lemma}{Lemma}
\newtheorem{corollary}{Corollary}
\newtheorem{proposition}{Proposition}
\begin{document}

%

%

\title{Fast kernel half-space depth for data with non-convex supports}



\author[1]{ Arturo Castellanos}
\author[1]{ Pavlo Mozharovskyi}
\author[1]{ Florence d'Alché-Buc}
\author[1]{ Hicham Janati}

\affil[1]{ LTCI, T{\'e}l{\'e}com Paris, Institut Polytechnique de Paris }

\maketitle

\begin{abstract}

Data depth is a statistical function that generalizes order and quantiles to the multivariate setting and beyond, with applications spanning over descriptive and visual statistics, anomaly detection, testing, etc. The celebrated halfspace depth exploits data geometry via an optimization program to deliver properties of invariances, robustness, and non-parametricity. Nevertheless, it implicitly assumes convex data supports and requires exponential computational cost. To tackle distribution's multimodality, we extend the halfspace depth in a Reproducing Kernel Hilbert Space (RKHS). We show that the obtained depth is intuitive and establish its consistency with provable concentration bounds that allow for homogeneity testing. The proposed depth can be computed using manifold gradient making faster than half-space depth by several orders of magnitude. The performance of our depth is demonstrated through numerical simulations as well as applications such as anomaly detection on real data and homogeneity testing.
\end{abstract}

\section{INTRODUCTION}

    Quantiles are a staple for statistics, but they are well-defined only in one dimension. Different attempts to generalize them to multiple dimensions have emerged in the last few decades, and among them, data depths~\citep{mosler2013depth}. Data depths are functions that measure the centrality of data points for given a probability distribution thereby providing a way to rank them. The field of data depths is rich of various propositions, such as halfspace depth~\citep{tukey1975mathematics}, Mahalanobis depth~\citep{Mahalanobis36} and projection depth~\citep{ZuoS00a} to name a few.
    Vast applications range from anomaly detection~\citep{RousseeuwH18,staerman2023functional}, classification via DD-plot~\citep{lange2014fast} to generalization of rank tests to the multivariate setting~\citep{liu1993quality,zuo2006limiting}.
    

Unlike the univariate case, multidimensional spaces lack an obvious axis of reference by which data points can be ranked. A straightforward attempt to generalize ranking is to compute the data projections on \emph{all}
possible axes and combining them with an aggregation function.  Taking their minimum for instance leads to the classical half-space depth $\halfspacedepth$. While $\halfspacedepth$ is robust to data contamination, it suffers from two main limitations: (1) it assumes that the distributions have convex supports; (2) it requires solving a non-convex and non-smooth optimization problem for which zero-order algorithms are the only available option.

    \paragraph{Related work}
        To tackle non-convexity, the Monge-Kantarovich depth~\citep{chernozhukov2017monge,del2018center}, proposes to solve an optimal transport problem between a well-known distribution of reference whose (halfspace) data depth regions are theoretically known (for instance, the uniform distribution on the unit ball) and the distribution of interest.
        However, it is not adapted to multimodal distributions since to map multiple modes, the transport map cannot be continuous.

    To tackle the complexity of such challenging distributions, kernel methods~\citep{scholkopf2002learning,hofmann2008kernel} have been one of the tools of predilection in machine learning. This led to the proposal of several kernel-based depth functions such as the  kernelized spatial depth~\citep{chen2008outlier} and the localized spatial depth (LSPD)~\citep{dutta2016multi} which are kernelized versions of the spatial depth~\citep{vardi2000multivariate}. Another possible definition of a kernel-based data depth can be obtained from the One-Class Support Vector Machine (OC-SVM)~\citep{scholkopf2001estimating, vert2006consistency} using its boundary decision function as a depth proxy. This leads to a learned depth function for the entire space through solving a single optimization problem but at the expense of robustness. Moreover, while all kernel-based methods can tackle highly non-linear data, tuning the kernel scale hyper-parameter can be cumbersome in practice, specially in unsupervised settings where no cross-validation can be carried out.
    
    Ideally, the ultimate data depth function should: (1) be robust i.e the depth of a data observation should not be influenced by adding or removing any outliers to the data; (2)
    adapted for non-convex supports; (3) scalable, i.e defined through an easy to solve differentiable optimization problem.
    
    While each of the aforementioned depth functions have their advantages, they fail to unite all the three properties mentioned above. Indeed, most robust depths take their robustness from the fact that for each point they compute a non-smooth optimization program with respect to the data (geometry). This robustness comes at the cost of either a high computational complexity of a difficult optimization problem, or a simple optimization problem with high restrictions on the nature of the distributions.

    \paragraph{Contributions}

    In this work, we propose a more natural extension of the half-space depth through kernel methods. By considering scalar products in a Reproducing Kernel Hilbert space associated with a radial-basis kernel, we show that the proposed depth amounts to taking local spherical projections of the data, hence the name \emph{Sphere depth}. The obtained optimization problem is non-differentiable and constrained on the unit sphere of $\bbR^d$. We propose a differentiable relaxation using the sigmoid function which allows, to our knowledge, for the first proposal of a consistent depth function computed through manifold gradient descent. 

Our contributions are as follows:
\begin{enumerate}
\item Inspired by kernel methods, we propose a novel depth function $\SphereD_s$ adapted for data with non-convex supports.
\item We propose a fast Riemannian gradient descent algorithm to compute $\SphereD_s$.
\item We prove that $\SphereD_s$ is consistent and provide asymptotic concentration bounds that allow for statistical homogeneity testing.
\item We confirm the theoretical properties of $\SphereD_s$ through experiments with both simulated and real anomaly detection data.
\end{enumerate}


    In Section 2, we present the natural extension of the halfspace depth to a RKHS and the challenges that come with it, leading to our definition of Sphere depth. In Section 3, we prove our main theoretical results: namely consistency with concentration bounds. In Section 4, we present  our manifold gradient descent algorithm and illustrate its speed gain compared to the halfspace depth empirically. Finally, Section 5, the performance of the proposed depth is illustrated on both simulated and real data applications.

    \paragraph{Notation}
Let $\bX$ be a probability distribution in $\cP(\bbR^d)$. Along the paper, $X$ denotes a random variable distributed according to $\bX$ i.e $X \sim \bX$ and $\samples=\{x_1, \ldots, x_n\}$ corresponds to i.i.d. samples drawn from $\bX$. $\|.\|$ denotes the Euclidean norm 2. We will denote $\sphere(z,r,\|\cdot\|)$ (respectively $\ball(z,r,\|\cdot\|)$) the sphere (respectively ball) of center $z$ and radius $r$. For the Euclidian norm, we will just note $\sphere(z,r)$ and $\ball(z,r)$ for short.
The sigmoid function is denoted and defined as
       $sig_s: x \mapsto \frac{1}{1+e^{-x/s}}$.
In particular, we will denote $sig$ for $s=1$.
$\mathcal{N}(\mu,\Sigma)$ refers to the Normal distribution with mean $\mu$ and covariance $\Sigma$.

\section{HALFSPACE DEPTH IN THE RKHS}

\subsection{Kernelized halfspace depth}
\paragraph{Halfspace depth}
    Let $\bX$ be a probability distribution in $\cP(\reals^d)$, and $X \sim \bX$. When $d=1$, a point $z \in \bbR$ is \emph{deep} within $\bX$ if $\bX$ is ``dense" both on the left and on the right of $z$. Formally, Tukey's halfspace depth is given by $\min_{u =\{-1,+1\} } \mathbb{P}(u(X-z))$. Its generalization to $d>1$ is directly given by:
        \begin{align}
        \TukeyD(z|\bX) = \inf_{u \in \sphere(0,1)} \mathbb{P}(\langle u, X\rangle \geq \langle u, z\rangle). \label{generalTukey}
    \end{align}





    The Tukey depth $HD$ defined in \eqref{generalTukey} provides a simple measure of centrality. However, it suffers from a major limitation: it is not suited for distributions with non convex supports. Indeed, since it is based on Euclidean scalar products, i.e., linear projections, it cannot capture centrality for multimodal distributions for example (see Figure \ref{fig:intuition} (c)). 
    \paragraph{Kernelized halfspace depth}
    Here, we propose to extend the depth's definition of \eqref{generalTukey} by taking the inner product in a Reproducing Kernel Hilbert Space (RKHS) $\cH$ induced by a positive definite symmetric kernel $k: \cX \times \cX \to \reals$ with canonical feature map $\varphi:\cX \to \cH$ (see reminder in Appendix, Section~\ref{app:reminder}). Let $S_\mathcal{H}$ be a ball in the RKHS $\cH$. Such a generalization can be provided by:
        \begin{equation}
        \label{eq:infeasible_depth}
       \inf_{f \in S_\mathcal{H}} \bbP\left(\langle f,\varphi(X) \rangle_{\cH} \geq \langle f, \varphi(z)\rangle_{\cH}\right)
    \end{equation}

    However, the feasible space of this functional optimization problem is too large and leads to a degenerate depth function that collapses to zero with probability one, as pointed out in~\citep{generalFDA, dutta2011}. This is the case for instance when the RKHS is dense in the space of continuous functions i.e if the kernel is universal~\citep{steinwart2008support}, which holds for the Gaussian kernel for instance. To avoid overfitting, we can restrict the search space to 
    functions of the form $f = \varphi(c)$ 
    with $c \in \cC \subset \bbR^d$, $\cC$ compact. With a such a parametrization, notice that for any radial basis kernel taking $c=z$ provides a trivial solution. Indeed it holds:
    \begin{align*}
            \forall x\neq z, \langle \varphi(z),\varphi(z) \rangle_{\cH} > \langle \varphi(z),\varphi(x) \rangle_{\cH}
        \enspace.
        \end{align*}
    Thus, for $c$ close enough to $z$, the probability $\bbP\left(k(c,X)  \geq k(c, z)\right)$ would still collapse to zero. 

\paragraph{Kernelized depth is sphere depth}
    To circumvent the limitations  mentioned above, we set the constraint set $\cC$ to a sphere of radius $r$ centered at $z$ denoted by $\bbS(z, r)$ with a  hyperparameter $r > 0$. Formally, we introduce the kernelized half-space depth function:
    \begin{equation}
     \label{eq:kernel_depth}
        \SphereD(z|\bX) \eqdef \inf_{c \in \bbS(z,r)} \bbP\left(k(c, X)  \geq k(c, z)\right)
    \end{equation}
        The kernelized depth of \eqnref{eq:kernel_depth} 
        has two main intuitive advantages: (1) the data projections are non-linear and can be adapted for distributions with non-convex contours; (2) the additional parameter $r$ provides a flexible lever to control the depth's sensitivity depending on the data. These features becomes self-evident when $k$ is a radial-basis kernel. Taking the Gaussian kernel for instance leads to the following proposition:
%
%
    \begin{proposition}
    \label{prop:sphere}
     Let $k$ be the Gaussian kernel $k(x, y) \eqdef e^{-\gamma\|x - y\|^2}$ with $\gamma > 0.$ The kernel depth is independent of $\gamma$ and can be written:
        \begin{equation}\label{eq:ball-form}
            \SphereD(z|\bX) = \inf_{c\in \sphere(z,r)} \mathbb P_\bX(\bbB(c,r))
        \end{equation}
    \end{proposition}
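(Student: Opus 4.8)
The plan is to observe that for the Gaussian kernel the level-set structure of $c \mapsto k(c,\cdot)$ is governed purely by Euclidean distances, so the defining event of the depth becomes a membership event in a Euclidean ball whose radius is pinned down by the constraint $c \in \sphere(z,r)$.

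Concretely, first I would fix an arbitrary $c \in \sphere(z,r)$ and rewrite the event $\{k(c,X) \ge k(c,z)\}$. Plugging in $k(x,y) = e^{-\gamma\|x-y\|^2}$, this event equals $\{e^{-\gamma\|c-X\|^2} \ge e^{-\gamma\|c-z\|^2}\}$. Since $\gamma > 0$ and $t \mapsto e^{-\gamma t}$ is strictly decreasing on $\reals$, this event is exactly $\{\|c-X\|^2 \le \|c-z\|^2\}$, equivalently $\{\|c-X\| \le \|c-z\|\}$. This is where the $\gamma$-independence originates: the threshold cancels and only the comparison of distances survives.

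Second, I would use the constraint $c \in \sphere(z,r)$, i.e.\ $\|c-z\| = r$, to simplify the right-hand side: the event becomes $\{\|c-X\| \le r\} = \{X \in \bbB(c,r)\}$ (a closed ball, since the original inequality $k(c,X)\ge k(c,z)$ is non-strict). Hence $\bbP(k(c,X) \ge k(c,z)) = \bbP_\bX(\bbB(c,r))$ for every $c \in \sphere(z,r)$. Taking the infimum over $c \in \sphere(z,r)$ on both sides yields $\SphereD(z|\bX) = \inf_{c \in \sphere(z,r)} \bbP_\bX(\bbB(c,r))$, and since the right-hand side makes no reference to $\gamma$, the depth is $\gamma$-independent.

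There is essentially no obstacle here; the only points deserving a word of care are (i) invoking strict monotonicity of the exponential so that the equivalence of events is exact rather than one-directional, and (ii) keeping track of the closed-ball convention inherited from the non-strict inequality --- this does not affect the value of the probability when $\bX$ places no mass on the relevant spheres, but the closed-ball formulation is the clean statement that holds without any regularity assumption on $\bX$. A one-line remark could also note that the same argument applies verbatim to any strictly decreasing radial profile, which is why the conclusion is stable across the whole radial-basis family and not an artefact of the Gaussian form.
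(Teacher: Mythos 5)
Your proof is correct and follows exactly the paper's argument: rewrite the event $\{k(c,X)\ge k(c,z)\}$ via strict monotonicity of $t\mapsto e^{-\gamma t}$ as $\{\|X-c\|\le r\}$ using $\|c-z\|=r$, then take the infimum over $c\in\sphere(z,r)$. Your additional remarks on the closed-ball convention and on general radial profiles are accurate but not needed for the statement.
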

        \begin{proof}
    For any $x, z \in \mathbb R^d$ and $c \in \bbS(z, r)$:
        \begin{equation*}
            k(c,x)\geq k(c,z) 
            \Leftrightarrow e^{-\gamma||x-c||^2} \geq e^{-\gamma r^2}
            \Leftrightarrow ||x-c|| \leq r.
        \end{equation*}
    Therefore $\bbP(k(c,\bX)\geq k(c,z))= \bbP_\bX(\bbB(c,r)) \qed$
    \end{proof}
The intuition provided by proposition \ref{prop:sphere} 
    is illustrated in Figure \ref{fig:intuition}. In practice, for the Gaussian kernel (or any radial basis kernel), computing $\SphereD(z|\samples)$ amounts to finding the ball of radius $r$ with the least amount of data observations, centered at a distance $r$ from $z$. For the rest of the paper, we will consider the radial-basis function kernel or Gaussian kernel defined by $ k(x,y) = e^{-\gamma||x-y||^2}.$
    
    \paragraph{The proposed depth}
    The sphere depth of proposition \ref{prop:sphere} can be written as:
    \begin{equation}
        \label{eq:sphere_depth_0}
        \SphereD(z | \bX) = \inf_{c\in \bbS(z, r)} \bbE \left[ \mathds 1_{\{r^2 - \|X - c\|^2 \geq 0 \}}\right]
     \end{equation}
    Problem \eqref{eq:sphere_depth_0} is a non-smooth and non-convex optimization problem that can only be solved  using gradient-free optimization algorithms (such as Nelder-Mead \citep{nelder1965simplex}) which do not scale well with $n$ or $d$. To tackle this limitation, we propose to smooth the indicator function using the sigmoid function and solve:
    \begin{equation}
        \label{eq:sigmoid_sphere_depth}
        \SphereD_s(z | \bX) \eqdef \inf_{c\in \bbS(z, r)} \bbE \left[ sig_s\left(r^2 - \|X - c\|^2\right)\right]
     \end{equation}

    This novel differentiable formulation has several advantages. First, it allows for a fast manifold gradient descent which will be detailed in Section \ref{s:algorithm}. Second, it provides a better depth ranking since its depth values are smoothed and not constrained to multiples of $\frac{1}{n}$. Third, the behavior of the original sphere depth can still be recovered with small values of $s$ since  $\lim_{s \to 0} \SphereD_s(z | \bX) = \SphereD(z | \bX)$ . In the rest of this paper, we denote $\SphereD_0$ and $\SphereD$ interchangeably to refer to the non-smoothed depth of \eqnref{eq:sphere_depth_0}.

\begin{figure}
 \includegraphics[width=\linewidth]{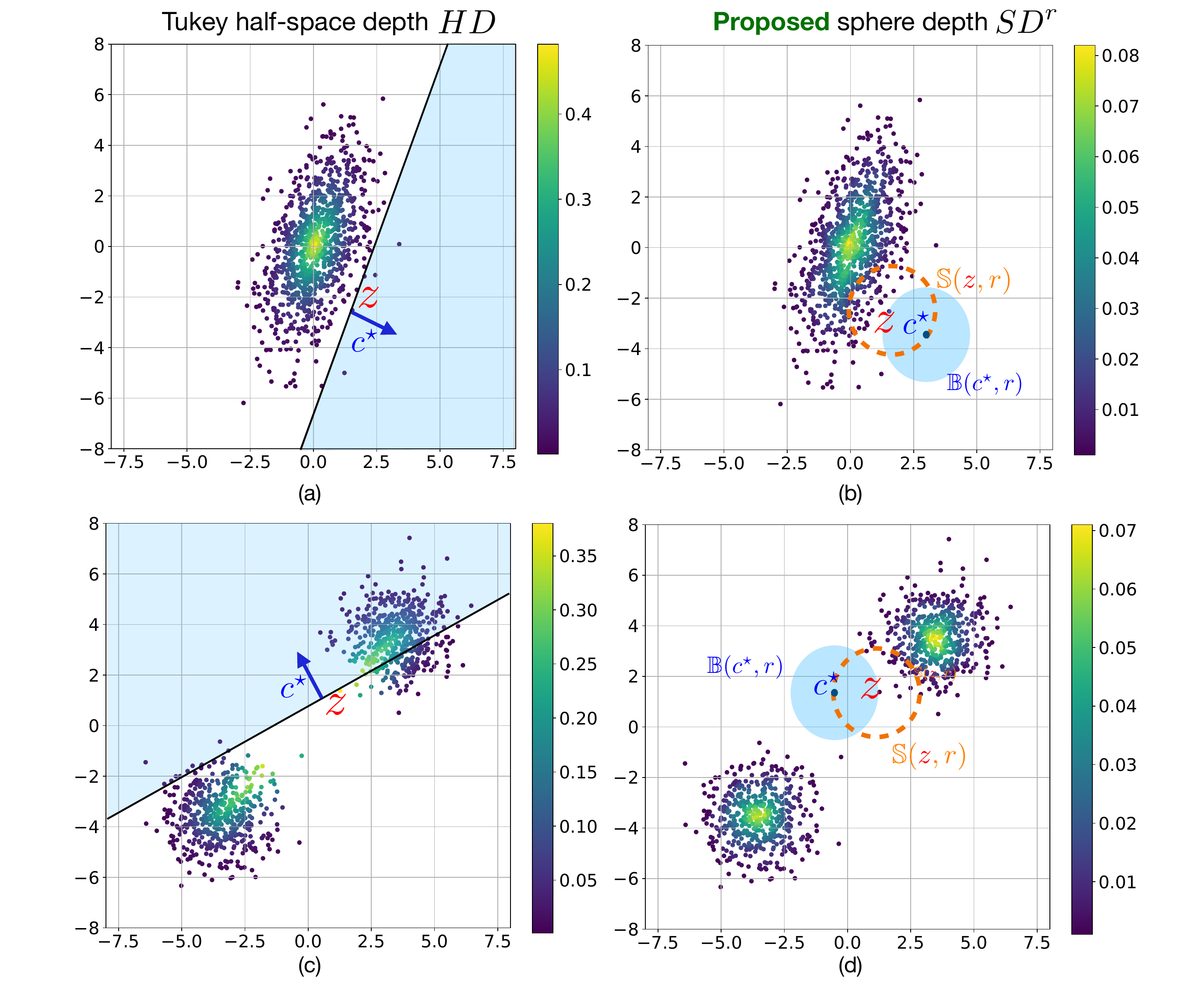}
 \caption{Example with toy data illustrating how the Tukey's half-space depth fails for data with non-convex supports, whereas the proposed Kernel approach accurately captures the depth of the data. All colorbars display the computed depth value: $HD$ \textbf{(left)} and $SD^r$ \textbf{(right)}. \label{fig:intuition}}
 \end{figure}

\subsection{Properties of the sphere depth}\label{subsec:depth-geom-prop}
In this section, we establish some first preliminary properties verified by the sphere depth.
    
\begin{proposition}\label{prop:depth-geom-properties}
        Let $z\in\bbR^d$ and $\bX$ a probability distribution in  $\cP(\bbR^d)$ and $X \sim \bX$. For any $s \geq 0$:
        \begin{itemize}
            \item (i)  $\SphereD_s(z|\bX)\in [0,1]$ and vanishes at infinity:\\ $\SphereD_s(z|\bX)\to 0$ when $||z||\rightarrow +\infty$,
            \item (ii) $\SphereD_s(z|\bX)$ is invariant by any global isometry $I$ of $\bbR^d$: $\SphereD_s(I(z)|I(\bX))=\SphereD_s(z|\bX)$.
        \end{itemize}
    \end{proposition}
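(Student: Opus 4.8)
The plan is to handle the two items independently; both are elementary, and the only step requiring genuine care is the decay at infinity when $s>0$. For (i), the membership in $[0,1]$ is immediate: $sig_s$ takes values in $[0,1]$ (in $\{0,1\}$ when $s=0$), so $\bbE\big[sig_s(r^2-\|X-c\|^2)\big]\in[0,1]$ for every $c$, and taking the infimum over $c\in\bbS(z,r)$ preserves this. For the decay, rather than control the whole infimum I would exhibit one good center. Given $z\neq 0$, set $c_z \eqdef z + r\,z/\|z\|$, the point of $\bbS(z,r)$ farthest from the origin, so that $\|c_z\|=\|z\|+r$. For any $x$ with $\|x\|\le \|z\|/2$, the triangle inequality gives $\|x-c_z\|\ge \|c_z\|-\|x\|\ge \|z\|/2+r$, hence $r^2-\|x-c_z\|^2\le r^2-(\|z\|/2+r)^2\to-\infty$ as $\|z\|\to\infty$; since $sig_s$ is nondecreasing with $sig_s(t)\to0$ as $t\to-\infty$ (and the indicator vanishes as soon as $t<0$), we get $\sup_{\|x\|\le\|z\|/2} sig_s(r^2-\|x-c_z\|^2)\to0$. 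Splitting the expectation according to whether $\|X\|\le\|z\|/2$,
\[
\SphereD_s(z|\bX)\ \le\ \bbE\big[sig_s(r^2-\|X-c_z\|^2)\big]\ \le\ \sup_{\|x\|\le\|z\|/2} sig_s\big(r^2-\|x-c_z\|^2\big)\ +\ \bbP\big(\|X\|>\|z\|/2\big),
\]
and both terms tend to $0$ as $\|z\|\to\infty$: the first by the estimate above, the second because $\bX$ is a probability measure on $\bbR^d$, so $\bbP(\|X\|>R)\to0$ as $R\to\infty$. In the $s=0$ case the argument collapses: $\bbB(c_z,r)\subseteq\{x:\|x\|\ge\|z\|\}$, so $\SphereD_0(z|\bX)\le\bbP(\|X\|\ge\|z\|)\to0$.

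For (ii), write a global isometry as $I(x)=Ox+b$ with $O$ orthogonal and $b\in\bbR^d$. Three observations suffice: $I$ restricts to a bijection $\bbS(z,r)\to\bbS(I(z),r)$ since isometries preserve distances and map centers to centers; $I(X)\sim I(\bX)$ by definition of the pushforward; and $\|I(x)-I(c)\|=\|x-c\|$ for all $x,c\in\bbR^d$. Substituting $c'=I(c)$ with $c\in\bbS(z,r)$ and $Y=I(X)$ with $X\sim\bX$ into the definition of $\SphereD_s(I(z)|I(\bX))$ yields
\[
\SphereD_s(I(z)|I(\bX)) = \inf_{c\in\bbS(z,r)} \bbE_{X\sim\bX}\big[sig_s\big(r^2-\|I(X)-I(c)\|^2\big)\big] = \inf_{c\in\bbS(z,r)} \bbE_{X\sim\bX}\big[sig_s\big(r^2-\|X-c\|^2\big)\big] = \SphereD_s(z|\bX),
\]
which is the claimed invariance.

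The main obstacle is mild: it is the decay-at-infinity claim for $s>0$, where one cannot simply invoke ``the ball escapes to infinity'' as in the $s=0$ case but must quantify that the soft indicator becomes uniformly small on a growing ball while the complementary mass is negligible by tightness of $\bX$; the rest is bookkeeping.
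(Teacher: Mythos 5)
Your proof is correct and follows essentially the same route as the paper's: boundedness of the sigmoid for the range claim, a two-term split of the expectation (mass in a large ball, where the triangle inequality forces $\|x-c\|$ to be large, plus a vanishing tail) for decay at infinity, and the sphere-to-sphere bijection with distance preservation for isometry invariance. The only difference is cosmetic: you bound the infimum by evaluating at the single explicit center $c_z=z+rz/\|z\|$, whereas the paper bounds the expectation uniformly over all $c\in\bbS(z,r)$ — both suffice since only an upper bound on the infimum is needed.
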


    See Appendix, Section~\ref{app:proofs} for proof.

    We also show how to relate scaling to a change of parameters:

    \begin{proposition}\label{prop:scaling}
    For any $z\in\bbR^d$ and $\bX$ a probability distribution, $\lambda, s>0$:
    \begin{align*}
        SD^{\lambda r}_0 (\lambda z | \lambda X) = \SphereD_{0} (z|\bX)\\
        SD^{\lambda r}_s (\lambda z | \lambda X) = SD^{r}_{\frac{s}{\lambda^2}} (z|\bX)
    \end{align*}
    \end{proposition}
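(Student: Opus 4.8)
The plan is to unfold both identities from the variational definition \eqref{eq:sigmoid_sphere_depth} and reduce them to an elementary rescaling of the sigmoid together with a change of variables on the sphere. By definition, $SD^{\lambda r}_s(\lambda z \mid \lambda X) = \inf_{c \in \bbS(\lambda z, \lambda r)} \bbE\!\left[ sig_s\!\left( (\lambda r)^2 - \|\lambda X - c\|^2 \right) \right]$, where $\lambda X \sim \lambda\bX$ denotes the pushforward of $\bX$ under the map $x \mapsto \lambda x$, so that for a random variable $X \sim \bX$ the quantity $\lambda X$ inside the expectation is exactly the relevant random variable.

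First I would perform the substitution $c = \lambda c'$. Since $\|c - \lambda z\| = \lambda\,\|c' - z\|$ and $\lambda > 0$, the map $c' \mapsto \lambda c'$ is a bijection from $\bbS(z,r)$ onto $\bbS(\lambda z, \lambda r)$, hence the infimum over $c \in \bbS(\lambda z,\lambda r)$ is the same as the infimum over $c' \in \bbS(z,r)$. Under this substitution the argument of the sigmoid becomes $(\lambda r)^2 - \|\lambda X - \lambda c'\|^2 = \lambda^2\big( r^2 - \|X - c'\|^2\big)$, so that $SD^{\lambda r}_s(\lambda z \mid \lambda X) = \inf_{c' \in \bbS(z,r)} \bbE\!\left[ sig_s\!\left( \lambda^2 (r^2 - \|X - c'\|^2) \right) \right]$.

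It then remains to absorb the factor $\lambda^2$ into the smoothing parameter. For $s > 0$, a direct computation gives $sig_s(\lambda^2 t) = \big(1 + e^{-\lambda^2 t/s}\big)^{-1} = \big(1 + e^{-t/(s/\lambda^2)}\big)^{-1} = sig_{s/\lambda^2}(t)$, which yields $SD^{\lambda r}_s(\lambda z \mid \lambda X) = \inf_{c' \in \bbS(z,r)} \bbE\!\left[ sig_{s/\lambda^2}\!\left( r^2 - \|X - c'\|^2 \right) \right] = SD^r_{s/\lambda^2}(z \mid \bX)$, the second identity. For $s = 0$, recall that $SD_0$ uses the indicator $\mathds 1_{\{\,\cdot\,\geq 0\}}$ in place of the sigmoid; since $\lambda^2 > 0$ we have $\mathds 1_{\{\lambda^2 t \geq 0\}} = \mathds 1_{\{t \geq 0\}}$, so no rescaling of the smoothing parameter takes place and we obtain $SD^{\lambda r}_0(\lambda z \mid \lambda X) = SD^r_0(z \mid \bX)$, the first identity. (Alternatively, the $s = 0$ case follows from the $s > 0$ case by letting $s \to 0$ and invoking $\lim_{s \to 0} SD_s = SD_0$.)

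There is essentially no hard step here: the argument is a change of variables plus the scaling property of the sigmoid. The only points requiring care are checking that $c \mapsto c/\lambda$ is genuinely a bijection between the two spheres, so that the two infima coincide exactly, and keeping the $s = 0$ convention separate so that the two displayed formulas are not mistakenly merged into one.
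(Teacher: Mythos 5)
Your proof is correct and follows essentially the same route as the paper: a bijective rescaling $c = \lambda c'$ between $\bbS(\lambda z,\lambda r)$ and $\bbS(z,r)$, the identity $sig_s(\lambda^2 t) = sig_{s/\lambda^2}(t)$ (the paper states the equivalent form $sig_{\lambda^2 s}(\lambda^2 t) = sig_s(t)$), and passing to the infimum, with the $s=0$ case handled by the scale-invariance of the indicator. No issues.
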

   
   It is interesting to note that the non-relaxed sphere depth $SD_0$ can easily be compared to the original halfspace depth. Indeed, each ball tangent to $z$ is contained in some halfspace. Formally:
    \begin{proposition}\label{prop:SD-less-than-HD}
        For any $z\in\bbR^d$ and any $r > 0$:
        \begin{equation*}
            \SphereD_0 (z|\bX) \leq \TukeyD (z|\bX)
        \end{equation*}
    \end{proposition}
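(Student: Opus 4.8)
The plan is to reduce the statement to an elementary geometric fact: every closed ball of radius $r$ whose bounding sphere passes through $z$ is contained in a closed halfspace whose bounding hyperplane also passes through $z$ (the supporting hyperplane of the ball at $z$). Once this inclusion is available, the inequality follows by monotonicity of $\bbP_\bX$ together with the bijection between $\bbS(z,r)$ and $\bbS(0,1)$ used in both infima.

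Concretely, I would fix $c \in \bbS(z,r)$, so $\|c-z\| = r$ and $z$ lies on $\bbS(c,r)$, and set $u_c \eqdef (c-z)/r \in \bbS(0,1)$ together with the halfspace $H_c \eqdef \{x \in \bbR^d : \langle u_c, x\rangle \geq \langle u_c, z\rangle\}$. The first step is to check $\bbB(c,r) \subseteq H_c$, which is a one-line computation: for $x \in \bbB(c,r)$, writing $x - c = (x-z) - (c-z)$ gives $\|x-c\|^2 = \|x-z\|^2 - 2\langle x-z, c-z\rangle + r^2 \leq r^2$, hence $\langle x-z, c-z\rangle \geq \tfrac12\|x-z\|^2 \geq 0$, i.e. $\langle u_c, x\rangle \geq \langle u_c, z\rangle$ and $x \in H_c$. (Equivalently, this is just the fact that a convex set with $z$ on its boundary lies in any supporting halfspace at $z$, and for a Euclidean ball that halfspace is the one normal to the radius $c-z$.)

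The second step combines the inclusion across all $c$. Monotonicity of $\bbP_\bX$ gives, for every $c \in \bbS(z,r)$,
\begin{equation*}
  \bbP_\bX(\bbB(c,r)) \;\leq\; \bbP_\bX(H_c) \;=\; \bbP\big(\langle u_c, X\rangle \geq \langle u_c, z\rangle\big).
\end{equation*}
Taking the infimum over $c \in \bbS(z,r)$, the left-hand side is $\SphereD_0(z|\bX)$ by Proposition~\ref{prop:sphere}, and since $c \mapsto u_c = (c-z)/r$ is a bijection from $\bbS(z,r)$ onto $\bbS(0,1)$, the infimum of the right-hand side is $\inf_{u \in \bbS(0,1)} \bbP(\langle u, X\rangle \geq \langle u, z\rangle) = \TukeyD(z|\bX)$, which is exactly the claim. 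Measurability is not an issue, since each $\bbB(c,r)$ and each $H_c$ is closed, hence Borel.

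I do not expect any real obstacle: the entire content is the tangent-ball-in-halfspace inclusion, and the only point requiring a little care is keeping the direction of the inequalities consistent when passing to the infimum — which is immediate because the pointwise bound $\bbP_\bX(\bbB(c,r)) \leq \bbP_\bX(H_c)$ holds uniformly in $c$. It is worth remarking that the argument genuinely uses $s = 0$ (the sigmoid-relaxed functional in \eqref{eq:sigmoid_sphere_depth} does not admit the same clean set-inclusion comparison), which is why the proposition is stated for $\SphereD_0$.
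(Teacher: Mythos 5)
Your proof is correct and follows essentially the same route as the paper's: both establish the inclusion $\bbB(c,r)\subseteq H_c$ of the tangent ball in the supporting halfspace at $z$ via the same one-line expansion of $\|x-c\|^2$, then pass to the infimum using the correspondence $c = z + ru$. The only cosmetic difference is that the paper translates to $z=0$ before computing, whereas you carry $z$ through explicitly.
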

    
    Moreover, equality holds when $r \to +\infty$ under some regular conditions, as the ball would cover almost all the halfspace except the hyperplane itself. However, if the data is entirely supported in a subspace of dimension strictly inferior to $d$, the extra dimension can be use to make the Sphere depth collapses where the halfspace depth would not . Dimension reduction when cleaning the data can help avoid this, and most of all sigmoid smoothing can prevent this kind of degeneracy.
    
\section{ASYMPTOTIC AND FINITE-SAMPLE RESULTS}\label{sec:properties}
\paragraph{Notation and definitions}
Let $(E, \|. \|)$ be a normed space and $K \subset E$.
A covering of radius $\varepsilon$ of $K \subset E$ is a set $C$ such that $K\subseteq \cup_{x\in C} \bbB(x,\varepsilon,||\cdot||)$. A covering which has the minimal number of elements among all possible covering is called a minimal covering. The cardinal of a minimal covering of $K$ is called the covering number and is noted $N(\varepsilon,K,||\cdot||)$.
    In a similar fashion, for a collection of functions $\mathcal{F}\subseteq\{\mathcal{X}\rightarrow \bbR\}$
    whose domain is a space $\cX$ and for a measure $\mu$ on $\cX$, a set ${[l_i,u_i]}^N_{i=1} \subseteq \{\mathcal{X}\rightarrow \bbR\}$ is called an $\varepsilon$-bracket of $\cF$ in $L_p(\mu)$, $p\geq 1$ if
    $
        \forall f \in \mathcal{F}, \exists i, l_i \leq f \leq u_i \text{ and } ||u_i-l_i||_{L_p(\mu)} \leq \varepsilon
    $.
    The bracketing number of $\cF$ is then the number of elements of a minimal $\varepsilon$-bracket which is of minimal cardinal among all possible $\varepsilon$-brackets of $\mathcal{F}$, and this number is noted $N_{[]}(\varepsilon,\cF,L_p(\mu))$
    Given a set $Z_{1:n}=(z_1,..,z_n)$ of $n$ sample points, we denote the Rademacher complexity of a function class $\mathcal{F}$ as:
    $
          \Radem_{Z_{1:n}}(\mathcal{F})=\frac{1}{n}\mathbb{E}_{\sigma_1,...,\sigma_n} \sup_{\mathcal{F}} \sum_{i=1}^n \sigma_i f(z_i)  
    $.
,
    where the $\sigma_i\sim Rademacher$ ($\mathbb{P}(\sigma_i=-1)=\mathbb{P}(\sigma_i=+1)=1/2$) independently. We note $\Radem_{P,n}(\mathcal{F})  =\mathbb{E}_{Z \sim P^n}[\Radem_{Z}(\mathcal{F})] $ the Rademacher complexity when we know the $Z_i$ are i.i.d. according to $P$.  

    We have now all the necessary tools to present our main theoretical results.
    
    \subsection{Consistency and concentration bounds}
    \label{ss:consistency}
    Our main theoretical contribution is summarized in the following theorem which establishes that $\SphereD_s$ is consistent with an exponentially decreasing concentration bound. 
    \begin{theorem}
    \label{thm:main}
    For any $r, s > 0$, the sphere depth $\SphereD_s$ verifies:
        \begin{equation}
        \label{eq:consistency}
             \underset{n\to+\infty} {\lim} |\SphereD_s(z|\samples)- \SphereD_s(z|\bX)| = 0
        \end{equation}
    and:
        \begin{equation}
                \label{eq:concentration}
            \bbP(|\SphereD_s(z|\samples)-\SphereD_s(z|\bX)|> R_{\bP_{\bX},n}(\cF_z)+t)\leq 2 e^{-\frac{nt^2}{2}}
        \end{equation}
    Moreover, if $\bX$ is absolutely continuous with a bounded support then for all $\varepsilon>0$:
        \begin{equation}
        \label{eq:consistency_support}
             \underset{n\to+\infty}{\lim} \bbP(\underset{z\in supp(\bX)}{\sup}|\SphereD_s(z|\samples)- \SphereD_s(z|\bX)| > \varepsilon) = 0
        \end{equation}
    \end{theorem}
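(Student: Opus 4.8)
The plan is to recast the three claims as statements about one explicit empirical process. Fix $z\in\bbR^d$ and, for $c\in\bbR^d$, set $g_c(x)\eqdef sig_s(r^2-\|x-c\|^2)\in(0,1)$; write $\cF_z\eqdef\{g_c:c\in\bbS(z,r)\}$ and, for a class $\mathcal F$ of measurable functions, $\Phi_n(\mathcal F)\eqdef\sup_{f\in\mathcal F}\bigl|\tfrac1n\sum_{i=1}^n f(x_i)-\bbE[f(X)]\bigr|$ with $X\sim\bX$. Since $\SphereD_s(z|\bX)=\inf_{c\in\bbS(z,r)}\bbE[g_c(X)]$ and $\SphereD_s(z|\samples)=\inf_{c\in\bbS(z,r)}\tfrac1n\sum_{i=1}^n g_c(x_i)$ and an infimum is $1$-Lipschitz for the sup norm, the first step is the deterministic bound $|\SphereD_s(z|\samples)-\SphereD_s(z|\bX)|\le\Phi_n(\cF_z)$; more generally, for any index set $C\supseteq\bbS(z,r)$ the same bound holds with $\{g_c:c\in C\}$ in place of $\cF_z$. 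Everything then reduces to controlling $\Phi_n$ over a suitable class.

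\textbf{Concentration and pointwise consistency.} For \eqref{eq:concentration} I would note that $(x_1,\ldots,x_n)\mapsto\SphereD_s(z|\samples)$ is an infimum of averages of $[0,1]$-valued functions, so modifying one $x_i$ moves it by at most $1/n$; McDiarmid's bounded-difference inequality gives $\bbP(|\SphereD_s(z|\samples)-\bbE\,\SphereD_s(z|\samples)|>t)\le 2e^{-nt^2/2}$ (the $1/n$ differences in fact give the sharper $2e^{-2nt^2}$). The bias is controlled by the deterministic bound above, $|\bbE\,\SphereD_s(z|\samples)-\SphereD_s(z|\bX)|\le\bbE[\Phi_n(\cF_z)]$, and the symmetrization lemma bounds $\bbE[\Phi_n(\cF_z)]$ by a universal multiple of $R_{\bP_{\bX},n}(\cF_z)$; a triangle inequality then yields \eqref{eq:concentration}. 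For \eqref{eq:consistency} it suffices to show $\cF_z$ is $\bX$-Glivenko--Cantelli, so that $\Phi_n(\cF_z)\to 0$ almost surely. I would establish this from finite bracketing numbers: on each ball $\{\|x\|\le M\}$ the family $\{g_c\}_{c\in\bbS(z,r)}$ is uniformly Lipschitz in $c$ — since $sig_s$ is $\tfrac1{4s}$-Lipschitz and $\bigl|\,\|x-c\|^2-\|x-c'\|^2\,\bigr|\le\|c-c'\|\,(\|c\|+\|c'\|+2M)$ with $c,c'$ in the bounded sphere $\bbS(z,r)$ — while $\sup_{c\in\bbS(z,r)}g_c(x)$ decays exponentially as $\|x\|\to\infty$, uniformly in $c$. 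Truncating at a level $M$ with $\bbP(\|X\|>M)$ small and covering $\bbS(z,r)$ by finitely many $\delta$-balls produces, for every $\varepsilon>0$, a finite $\varepsilon$-bracket of $\cF_z$ in $L_1(\bX)$; the bracketing Glivenko--Cantelli theorem then applies.

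\textbf{Uniform consistency on the support.} Under the extra hypothesis that $\bX$ is absolutely continuous with bounded support $K\eqdef\mathrm{supp}(\bX)$, $K$ is compact (closed and bounded) and so is $K_r\eqdef\{c\in\bbR^d:\mathrm{dist}(c,K)\le r\}$. Since $\bigcup_{z\in K}\bbS(z,r)\subseteq K_r$, the general form of the deterministic bound gives $\sup_{z\in K}|\SphereD_s(z|\samples)-\SphereD_s(z|\bX)|\le\Phi_n(\mathcal G)$ with $\mathcal G\eqdef\{g_c:c\in K_r\}$. Now that $K$ is bounded, the estimate $\bigl|\,\|x-c\|^2-\|x-c'\|^2\,\bigr|\le\|c-c'\|\,(\|c\|+\|c'\|+2\|x\|)$ is uniform over $x\in K$ and $c,c'\in K_r$, so $c\mapsto g_c$ is globally Lipschitz from $K_r$ into $L_\infty(\bX)$; covering $K_r\subset\bbR^d$ by $\lesssim(\mathrm{diam}(K_r)/\delta)^d$ balls yields $N_{[]}(\varepsilon,\mathcal G,L_1(\bX))<\infty$ for every $\varepsilon>0$, hence $\mathcal G$ is $\bX$-Glivenko--Cantelli, $\Phi_n(\mathcal G)\to 0$ a.s., and \eqref{eq:consistency_support} follows. (Equivalently, the polynomial covering numbers make $R_{\bP_{\bX},n}(\mathcal G)\to 0$ through Dudley's entropy integral, and McDiarmid applied to $\Phi_n(\mathcal G)$ upgrades this to convergence in probability at an exponential rate.)

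\textbf{Main obstacle.} The empirical-process toolbox — McDiarmid, symmetrization, bracketing Glivenko--Cantelli — is routine once the problem is phrased this way; the two steps that need care are (i) folding the $z$-dependent constraint sphere $\bbS(z,r)$ into a single fixed function class, which for \eqref{eq:consistency_support} genuinely relies on compactness of $K_r$ and is where the bounded-support hypothesis enters, and (ii) proving the Glivenko--Cantelli property of $\cF_z$ in \eqref{eq:consistency} with no moment or support assumption on $\bX$ — a naive ``$c\mapsto g_c$ is $L_1(\bX)$-Lipschitz'' bound would require $\bbE\|X\|<\infty$, so one must instead combine local Lipschitzness with the uniform exponential decay of $g_c$ at infinity to build the brackets. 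I expect (ii), the bracketing construction under minimal assumptions, to be the main technical obstacle; matching the precise constants in \eqref{eq:concentration} is only bookkeeping.
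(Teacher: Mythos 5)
Your proposal is correct and shares the paper's backbone — dominate the depth difference by the uniform deviation $\sup_{f\in\cF_z}|\bbE_\bX[f]-\bbE_{\samples}[f]|$, prove Glivenko--Cantelli via finite bracketing numbers obtained from a cover of the constraint set, and get \eqref{eq:concentration} from symmetrization plus bounded differences — but it diverges in two places worth noting. First, for \eqref{eq:consistency_support} the paper covers the \emph{evaluation} set $K=\mathrm{supp}(\bX)$ with an $\varepsilon$-net in $z$, invokes Lipschitzness of $z\mapsto\SphereD_s(z|\bX)$ (its Proposition~\ref{prop:lipschitz-depth}), and takes a union bound of the pointwise concentration over the net; you instead absorb all spheres $\bbS(z,r)$, $z\in K$, into the single class $\{g_c: c\in K_r\}$ indexed by the compact $r$-enlargement $K_r$ and apply the same GC machinery once. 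Your route is cleaner — it makes Proposition~\ref{prop:lipschitz-depth} and the union bound unnecessary — at the price of losing the explicit finite-$n$ bound of Corollary~\ref{cor:bounded-set} that the paper also wants for its Proposition~\ref{prop:A3}. Second, for the pointwise GC property you build brackets by truncating at $\|x\|\le M$ and exploiting the uniform decay of $g_c$ at infinity, because the naive Lipschitz-in-$c$ bound via $sig_s$ being $\tfrac1{4s}$-Lipschitz grows with $\|x\|$; the paper sidesteps this entirely by observing that the full gradient $\nabla_c g_c(x)=\tfrac{2(c-x)}{s}\,sig(\cdot)(1-sig(\cdot))$ is \emph{globally} bounded (the sigmoid-derivative factor decays like $e^{-\|x-c\|^2/s}$ and kills the linear term), so $c\mapsto g_c$ is Lipschitz into $L_\infty$ with a constant envelope and Theorem~2.7.11 of van der Vaart--Wellner applies directly, with no moment or truncation argument. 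Your truncation is valid but can be replaced by this one-line observation. Finally, your concentration route (McDiarmid on the infimum itself plus a symmetrization bound on the bias) produces $2R_{\bP_\bX,n}(\cF_z)+t$ rather than $R_{\bP_\bX,n}(\cF_z)+t$; the paper's own Proposition~\ref{prop:bounded-rademacher} has the same factor-of-two looseness in its sketch, so this is indeed only bookkeeping, but as literally stated neither derivation delivers the constant in \eqref{eq:concentration} without tightening the symmetrization step.
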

    \label{sub:consistency}
    \subsection{Proof of Theorem \ref{thm:main}}
In this section we provide the sketch of proof of the results of our Theorem \ref{thm:main}. The full demonstration with technical details is provided in the Appendix, Section~\ref{app:proofs}.
\paragraph{Proof of consistency} To show the consistency of $\SphereD_s$, it will be useful to rewrite the data depth as:
    \begin{align*}
        \SphereD_s(z|\bX) &= \inf_{f\in \cF_z} \bbE[-f(X)] \\
                &= -\sup_{f\in \cF_z} \bbE [f(X)] 
    \end{align*}
    where $\mathcal{F}_z$ is the set of functions:
    \begin{equation*}
        \cF_z = \{ f_{z,c}: x \mapsto -sig_s\left(r^2 - \|x - c\|^2\right) | c\in \bbS(z,r) \}
    \end{equation*}
Therefore, it holds:
    \begin{align}
        |\SphereD_s(z|\samples)- &\SphereD_s(z|\bX)|  \nonumber \\ &= \lvert \sup_{f\in \cF_z} \bbE_\bX[f(X)]-\sup_{f\in \cF_z} \bbE_{\samples}[f(X)] \rvert  \nonumber \\ 
                        &\leq \sup_{f\in \cF_z} \lvert \bbE_\bX[f(X)]- \bbE_{\samples}[f(X)] \rvert \label{eq:depth2f}
    \end{align}
    Therefore, to prove the consistency of $\SphereD_s$ it is sufficient to establish the Glivenko-Cantelli property for the class of functions $\cF_z$. To do so, we will use the following well-known characterisation via its bracketing number:

    \begin{theorem}~\cite[Th 2.4.1]{van1996weak}\label{thm:GC}
        Let $\cF$ be a class of measurable functions such that, for every $\varepsilon > 0$, its bracketing number is finite : $N_{[]}(\varepsilon,\mathcal{F},L_1(P)) < \infty$ . Then $\mathcal{F}$ is Glivenko-Cantelli. 
    \end{theorem}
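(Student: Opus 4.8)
The plan is the classical bracketing argument: for a prescribed precision $\varepsilon$, sandwich every $f\in\cF$ between the endpoints of one of finitely many $\varepsilon$-brackets, thereby reducing the \emph{uniform} law of large numbers over $\cF$ to the ordinary law of large numbers for those finitely many endpoints, and then let $\varepsilon\downarrow 0$. Concretely, I would first fix $\varepsilon>0$ and, using the hypothesis $N_{[]}(\varepsilon,\cF,L_1(P))<\infty$, take a finite $\varepsilon$-bracket $\{[l_i,u_i]\}_{i=1}^{N}$ of $\cF$ in $L_1(P)$, which may be chosen with $L_1(P)$-integrable endpoints and with $\|u_i-l_i\|_{L_1(P)}\le\varepsilon$ for every $i$. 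For any $f\in\cF$, pick an index $i$ with $l_i\le f\le u_i$; since $0\le u_i-f\le u_i-l_i$, we get $P(u_i-f)\le\|u_i-l_i\|_{L_1(P)}\le\varepsilon$, and writing $P_n$ for the empirical measure of the i.i.d.\ sample and $P$ for the true measure, $P_n f - P f\le (P_n u_i - P u_i)+P(u_i-f)\le(P_n u_i - P u_i)+\varepsilon$. Taking the supremum over $f$ yields $\sup_{f\in\cF}(P_n f - P f)\le\max_{1\le i\le N}(P_n u_i - P u_i)+\varepsilon$, and the same argument with the lower endpoints gives $\sup_{f\in\cF}(P f - P_n f)\le\max_{1\le i\le N}(P l_i - P_n l_i)+\varepsilon$.

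Next I would apply the strong law of large numbers to the $2N$ integrable functions $u_1,\dots,u_N,l_1,\dots,l_N$: for each of them $P_n u_i - P u_i\to 0$ and $P_n l_i - P l_i\to 0$ almost surely, and since there are only finitely many, both maxima above converge to $0$ almost surely; hence $\limsup_{n}\sup_{f\in\cF}|P_n f - P f|\le\varepsilon$ almost surely. Finally, applying this along a sequence $\varepsilon_k\downarrow 0$ and intersecting the countably many almost-sure events shows $\sup_{f\in\cF}|P_n f - P f|\to 0$ almost surely, i.e.\ $\cF$ is Glivenko--Cantelli.

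The point that needs care is \emph{measurability}: when $\cF$ is uncountable, $\sup_{f\in\cF}|P_n f - P f|$ need not be a measurable function of the sample, so the displayed inequalities and limits should be read in terms of outer probability/outer expectation — the argument still goes through because the dominating quantities $\max_{1\le i\le N}(\cdot)$ are finite maxima of measurable functions and hence measurable. The remaining points — that the bracket endpoints may be taken integrable, and the diagonal passage $\varepsilon_k\downarrow 0$ — are routine; the entire content of the theorem is the observation that finitely many brackets turn a uniform law of large numbers into finitely many ordinary ones.
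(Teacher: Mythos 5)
Your proof is correct: this is precisely the classical bracketing argument (sandwich each $f$ between finitely many integrable bracket endpoints, apply the strong law to those endpoints, and let $\varepsilon\downarrow 0$, with the supremum interpreted via outer probability). The paper does not prove this statement itself but cites it as Theorem 2.4.1 of van der Vaart and Wellner, whose proof is exactly the one you give, so there is nothing to add.
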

    First remark that the functions of $\cF_z$ are well-behaved thanks to the sigmoid function:
    \begin{proposition}\label{prop:lip}
        The functions $f_{z,c}$ of $\mathcal{F}_z$ are Lipschitz (with respect to $x$).
        Symmetrically, they are also Lipschitz with respect to $c$.
    \end{proposition}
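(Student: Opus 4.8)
The plan is to derive both Lipschitz statements from a single observation: the derivative of the sigmoid decays exponentially, and this more than compensates for the quadratic growth of $x\mapsto\|x-c\|^2$, so that $f_{z,c}$ is globally Lipschitz with a constant depending only on $r$ and $s$. First I would recall the elementary facts about $sig_s$: it is smooth with $sig_s'(u) = \tfrac{1}{s}\,sig_s(u)\bigl(1-sig_s(u)\bigr)$, hence $0 \le sig_s'(u) \le \tfrac{1}{4s}$ for every $u$; and, writing the quantity $\tfrac{e^{-u/s}}{(1+e^{-u/s})^2}$ in terms of $e^{-|u|/s}$ (treating $u\ge 0$ and $u<0$ separately), one also gets the tail bound $sig_s'(u) \le \tfrac{1}{s}\,e^{-|u|/s}$.

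Next I would differentiate $f_{z,c}(x) = -sig_s\!\left(r^2-\|x-c\|^2\right)$ in $x$, which by the chain rule gives $\nabla_x f_{z,c}(x) = 2\,sig_s'\!\left(r^2-\|x-c\|^2\right)(x-c)$. Setting $t=\|x-c\|$, the gradient norm equals $2t\,sig_s'(r^2-t^2)$, a function of $t\ge 0$ alone that does not depend on $c$ or $z$. I would bound it uniformly in two regimes: for $t\le r$, the bound $sig_s'\le \tfrac{1}{4s}$ yields $2t\,sig_s'(r^2-t^2)\le \tfrac{r}{2s}$; for $t>r$, the tail bound yields $2t\,sig_s'(r^2-t^2)\le \tfrac{2t}{s}\,e^{(r^2-t^2)/s}$, and since $t\mapsto t\,e^{-t^2/s}$ attains its maximum at $t=\sqrt{s/2}$, this is finite. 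Taking the larger of the two bounds produces a constant $L=L(r,s)$ with $\|\nabla_x f_{z,c}(x)\|\le L$ for all $x\in\bbR^d$, and the mean value inequality then gives that each $f_{z,c}$ is $L$-Lipschitz on $\bbR^d$.

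For the symmetric claim I would simply note that $c\mapsto f_{z,c}(x) = -sig_s\!\left(r^2-\|x-c\|^2\right)$ has exactly the same functional form with the roles of $x$ and $c$ swapped, so $\nabla_c f_{z,c}(x) = 2\,sig_s'\!\left(r^2-\|x-c\|^2\right)(c-x)$ obeys the identical bound $L(r,s)$; hence $c\mapsto f_{z,c}(x)$ is $L$-Lipschitz, uniformly in $x$. Here $c$ ranges only over $\bbS(z,r)$, but Lipschitzness on all of $\bbR^d$ is more than enough — and the restriction to the sphere is precisely what will later let us convert covering numbers of $\bbS(z,r)$ into bracketing numbers of $\cF_z$.

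I expect the only genuine subtlety to be making the ``exponential decay beats quadratic growth'' heuristic rigorous, i.e.\ checking that $t\,e^{-t^2/s}$ is bounded and assembling the two-regime estimate cleanly, while keeping track that the Lipschitz constant $L$ depends on $r$ and $s$ only (and not on $z$, $c$, or $x$); this uniformity is what makes the subsequent bracketing-number argument for $\cF_z$ go through.
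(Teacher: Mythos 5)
Your proof is correct and follows essentially the same route as the paper's: compute $\nabla_x f_{z,c}$ by the chain rule, observe that its norm $2\|x-c\|\,sig_s'\!\left(r^2-\|x-c\|^2\right)$ is uniformly bounded because the exponential decay of $sig_s'$ dominates the linear factor, and invoke the $x\leftrightarrow c$ symmetry of $\|x-c\|$ for the second claim. Your two-regime estimate ($t\le r$ via $sig_s'\le\tfrac{1}{4s}$, $t>r$ via the tail bound) is in fact a more explicit and quantitative version of the paper's appeal to continuity plus decay at infinity, and correctly records that $L$ depends only on $r$ and $s$.
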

    \begin{proofsketch}
         The gradient of any element of $\cF_z$ is easily upper-bounded. See the Appendix, Section~\ref{app:proofs} for the full proof.
    \end{proofsketch}%
    The Lipschitzness of the sigmoid 
    allows us to reduce the bracketing number of $\cF_z$ to the covering number of the sphere $\sphere(z,r)$. The covering number of a Euclidian ball is also a classic result (see for instance the book
    by~\citet{vershynin2018high}):
    \begin{lemma}~(\citet{vershynin2018high})
    \label{lem:ballcover}
        The covering number of the Euclidian unit ball $B$ in dimension d is of order:
        $N(\varepsilon,B,||\cdot||_2) = \Theta\left(\frac{1}{\varepsilon^d}\right)$.
    \end{lemma}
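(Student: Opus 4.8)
The plan is to establish the matching upper and lower bounds $N(\varepsilon,B,\|\cdot\|_2)=O(\varepsilon^{-d})$ and $N(\varepsilon,B,\|\cdot\|_2)=\Omega(\varepsilon^{-d})$ by standard volumetric (packing--covering) comparisons, treating the dimension $d$ as fixed so that $d$-dependent constants are absorbed into the $\Theta$; I would also restrict to $\varepsilon\le 1$, since for larger $\varepsilon$ a single ball covers $B$ and the covering number is trivially constant.

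For the upper bound I would go through a maximal packing. Let $P=\{p_1,\dots,p_M\}\subseteq B$ be a maximal $\varepsilon$-separated subset, i.e.\ $\|p_i-p_j\|\ge\varepsilon$ for $i\ne j$, maximal for this property. Maximality forces $P$ to be an $\varepsilon$-net of $B$ (any $x\in B$ must lie within $\varepsilon$ of some $p_i$, otherwise $P\cup\{x\}$ would still be separated), so $N(\varepsilon,B,\|\cdot\|_2)\le M$. To bound $M$, note the balls $\ball(p_i,\varepsilon/2)$ are pairwise disjoint and contained in $\ball(0,1+\varepsilon/2)$; comparing Lebesgue volumes gives $M(\varepsilon/2)^d\,\mathrm{Vol}(B)\le(1+\varepsilon/2)^d\,\mathrm{Vol}(B)$, hence $M\le(1+2/\varepsilon)^d\le(3/\varepsilon)^d$, i.e.\ $N(\varepsilon,B,\|\cdot\|_2)=O(\varepsilon^{-d})$.

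For the lower bound, let $C$ be any $\varepsilon$-covering of $B$, so that $B\subseteq\bigcup_{c\in C}\ball(c,\varepsilon)$; taking volumes yields $\mathrm{Vol}(B)\le|C|\,\varepsilon^d\,\mathrm{Vol}(B)$, hence $|C|\ge\varepsilon^{-d}$, and minimizing over coverings gives $N(\varepsilon,B,\|\cdot\|_2)\ge\varepsilon^{-d}$. Together with the upper bound this proves the $\Theta(\varepsilon^{-d})$ claim.

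There is no real obstacle here: this is a textbook argument, and the only points needing mild care are the boundary inflation from $B$ to $\ball(0,1+\varepsilon/2)$ in the packing step and the harmless dependence of the constants on $d$ (which is fine in our setting, where $d$ is fixed and only the rate $\varepsilon^{-d}$ feeds into the bracketing-number bound for $\cF_z$ used in the proof of Theorem~\ref{thm:main}).
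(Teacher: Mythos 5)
Your proof is correct and is exactly the standard packing--volume argument behind the bounds $\left(\tfrac{1}{\varepsilon}\right)^d \leq N(\varepsilon,B,\|\cdot\|_2) \leq \left(\tfrac{2}{\varepsilon}+1\right)^d$ that the paper simply imports from \citet{vershynin2018high} (Corollary 4.2.11) without reproving. The only cosmetic point is to state explicitly, as the appendix version of the lemma does, that the $\Theta(\varepsilon^{-d})$ rate is claimed for $0<\varepsilon<1$, which you already handle by noting the covering number is $O(1)$ for $\varepsilon\geq 1$.
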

    By dilation of the radius $r$, we obtain the following result:

    \begin{proposition}\label{prop:bracketnum}
    The bracketing number of $\cF_z$ is bounded: $N_{[]}(\varepsilon,\mathcal{F}_z,L_1(P)) <\infty$ for any $\varepsilon>0$. In particular, for $\varepsilon$ sufficiently small, 
        $N_{[]}(\varepsilon,\mathcal{F}_z,L_1(P)) \leq O\left((\frac{r}{\varepsilon})^d\right)$, otherwise 
        $N_{[]}(\varepsilon,\mathcal{F}_z,L_1(P)) = O(1)$.
    \end{proposition}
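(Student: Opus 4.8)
The plan is to reduce the bracketing number of $\cF_z$ to a covering number of the sphere $\sphere(z,r)$ using the Lipschitz control of Proposition \ref{prop:lip}, and then to invoke the classical ball covering estimate of Lemma \ref{lem:ballcover}. First I would record, from Proposition \ref{prop:lip}, that the map $c \mapsto f_{z,c}$ is $L$-Lipschitz in $c$ \emph{uniformly in} $x$, for a constant $L = L(r,s)$ independent of $\varepsilon$: since $\|\nabla_c f_{z,c}(x)\| = 2\,|sig_s'(r^2-\|x-c\|^2)|\,\|x-c\|$ and $|sig_s'|$ decays exponentially fast as its argument tends to $-\infty$ (hence at a Gaussian rate in $\|x-c\|$), this quantity stays bounded over all $x \in \bbR^d$ and $c \in \sphere(z,r)$ --- precisely the ``symmetric'' part of Proposition \ref{prop:lip}.

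Next, given $\varepsilon > 0$, I would set $\delta = \varepsilon/(2L)$, take a minimal $\delta$-covering $c_1,\dots,c_N$ of $\sphere(z,r)$ (so $N = N(\delta,\sphere(z,r),\|\cdot\|)$), and build the brackets $[l_i,u_i]$ with $l_i = f_{z,c_i} - L\delta$ and $u_i = f_{z,c_i} + L\delta$. Any $f_{z,c} \in \cF_z$ has some $c_i$ with $\|c-c_i\|\le\delta$, hence $\sup_x |f_{z,c}(x)-f_{z,c_i}(x)| \le L\delta$, so $l_i \le f_{z,c} \le u_i$ pointwise; and $\|u_i-l_i\|_{L_1(P)} = 2L\delta = \varepsilon$ since the width is constant. (The brackets need not belong to $\cF_z$, so there is no constraint on where the $c_i$ sit.) This gives $N_{[]}(\varepsilon,\cF_z,L_1(P)) \le N(\varepsilon/(2L),\sphere(z,r),\|\cdot\|)$.

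It then remains to estimate $N(\delta,\sphere(z,r),\|\cdot\|)$. Since $\sphere(z,r)\subset\ball(z,r)$, and covering numbers are translation invariant and rescale as $N(\delta,\ball(z,r),\|\cdot\|) = N(\delta/r,\ball(0,1),\|\cdot\|)$, Lemma \ref{lem:ballcover} yields $N(\delta,\sphere(z,r),\|\cdot\|) \le N(\delta/r,\ball(0,1),\|\cdot\|) = \Theta\big((r/\delta)^d\big)$. Substituting $\delta=\varepsilon/(2L)$ and absorbing the $\varepsilon$-free factor $(2L)^d$ into the constant gives $N_{[]}(\varepsilon,\cF_z,L_1(P)) = O\big((r/\varepsilon)^d\big)$ for small $\varepsilon$. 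For $\varepsilon$ bounded away from $0$ I would argue separately: every $f_{z,c}$ takes values in $[-1,0]$, so the single constant bracket $[-1,0]$, of $L_1(P)$-width $1$, covers $\cF_z$; thus $N_{[]}(\varepsilon,\cF_z,L_1(P)) = 1$ once $\varepsilon\ge 1$ (in fact once $\delta = \varepsilon/(2L) \ge 2r$, the diameter of $\sphere(z,r)$, so that one covering center suffices). In both regimes the number is finite for every $\varepsilon>0$, which is the statement.

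The step I expect to be the main obstacle is the uniform-in-$x$ Lipschitz bound underpinning the reduction: one must verify that the Gaussian-type decay of $sig_s'$ at $-\infty$ beats the linear growth of $\|x-c\|$ so that $L<\infty$ --- this is exactly what Proposition \ref{prop:lip} supplies. Everything after that is the routine passage from a metric covering of the parameter set to a bracketing of a uniformly Lipschitz-parametrized family, together with the elementary scaling of Euclidean ball covering numbers.
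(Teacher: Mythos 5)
Your proof is correct and follows essentially the same route as the paper: reduce the bracketing number of $\cF_z$ to a covering number of $\sphere(z,r)$ via the uniform-in-$x$ Lipschitz dependence on $c$ from Proposition~\ref{prop:lip}, then pass to the ball and rescale using Lemma~\ref{lem:ballcover}. The only difference is that you construct the brackets $[f_{z,c_i}-L\delta,\,f_{z,c_i}+L\delta]$ explicitly, whereas the paper cites Theorem~\ref{thm:van-lip} (van der Vaart--Wellner, Thm.~2.7.11) with the constant envelope $F\equiv L$ to accomplish the same reduction.
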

    Therefore, the assumptions of Theorem~\ref{thm:GC} hold and we can conclude that $\cF_z$ is a class of Glivenko-Cantelli functions, thus the consistency follows.
    \paragraph{Proof of concentration bounds}
    First, we deduce bounds on the Rademacher complexity of $\mathcal{F}_z$ using (see again the same book of reference~\citep{vershynin2018high}):
    \begin{theorem}[Chaining, Dudley]\label{thm:dudley}
    For any distribution $P$, the Rademacher complexity of a function class $\cF$ is bounded, up to a constant factor $C$, by the following integral:
    \begin{equation*}
        R_{P,n}(\mathcal{F}) \leq C \int_0^{\infty} \sqrt{\frac{\log N(\varepsilon,\mathcal{F},L_2(P))}{n}} d\varepsilon
    \end{equation*}
    \end{theorem}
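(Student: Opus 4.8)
This statement is the classical Dudley entropy-integral bound, so the plan is simply to reproduce the standard \emph{chaining} argument (as given in \citet{vershynin2018high}); nothing specific to the sphere depth is needed at this level of generality, though the later application to $\mathcal{F}_z$ will make one bookkeeping point moot (see below).

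First I would fix the sample $z_{1:n}$ and observe that, conditionally on it, the symmetrized process $f \mapsto \frac1n\sum_{i=1}^n \sigma_i f(z_i)$ is sub-Gaussian over $\mathcal{F}$: the increment at $f,g$ is a sum of independent bounded terms, hence sub-Gaussian with variance proxy proportional to $\frac1n\|f-g\|_{L_2(P_n)}^2$, where $P_n$ denotes the empirical measure. Thus $\mathbb{E}_\sigma\sup_{f}\frac1n\sum_i\sigma_i f(z_i)$ is the expected supremum of a sub-Gaussian process whose canonical metric is a rescaling of the $L_2$ metric, and Dudley's method applies verbatim.

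Next I would set up the chain. Let $D$ be the $L_2$-diameter of $\mathcal{F}$, put $\varepsilon_j = 2^{-j}D$, fix minimal $\varepsilon_j$-nets $T_j$ of $\mathcal{F}$ with $T_0=\{f_0\}$ a single point, and let $\pi_j f$ be a nearest element of $T_j$ to $f$. Telescoping $f-f_0=\sum_{j\ge1}(\pi_j f-\pi_{j-1}f)$ and using $\mathbb{E}_\sigma\frac1n\sum_i\sigma_i f_0(z_i)=0$, the quantity of interest is at most $\sum_{j\ge1}\mathbb{E}_\sigma\sup_f\frac1n\sum_i\sigma_i(\pi_j f-\pi_{j-1}f)(z_i)$. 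At level $j$ the link $\pi_j f-\pi_{j-1}f$ takes at most $N(\varepsilon_j,\mathcal{F},L_2)^2$ distinct values, each of $L_2$-norm $\le\varepsilon_j+\varepsilon_{j-1}\le 3\varepsilon_j$; Massart's finite-class maximal inequality then bounds the $j$-th term by a constant times $\varepsilon_j\sqrt{\log N(\varepsilon_j,\mathcal{F},L_2)}/\sqrt n$. Since $N(\cdot,\mathcal{F},L_2)$ is nonincreasing and $\varepsilon_j\asymp\varepsilon_j-\varepsilon_{j+1}$, each term is dominated by $\frac{C}{\sqrt n}\int_{\varepsilon_{j+1}}^{\varepsilon_j}\sqrt{\log N(\varepsilon,\mathcal{F},L_2)}\,d\varepsilon$, so the sum telescopes into $\frac{C}{\sqrt n}\int_0^{D}\sqrt{\log N(\varepsilon,\mathcal{F},L_2)}\,d\varepsilon$, which equals $C\int_0^\infty\sqrt{\log N(\varepsilon,\mathcal{F},L_2)/n}\,d\varepsilon$ because $\log N(\varepsilon,\cdot)=0$ for $\varepsilon\ge D$. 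Taking expectation over $z_{1:n}$ closes the argument.

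The main obstacle is not conceptual but a matter of bookkeeping: the increments above are controlled by the \emph{empirical} $L_2(P_n)$ metric, whereas the statement is phrased with the \emph{population} $L_2(P)$ covering number. One resolution is to carry the empirical covering number through the chaining and then bound it by an $L_\infty$ covering number; a cleaner one, which suffices for the use made of this theorem here, is to note that for the class $\mathcal{F}_z$ — whose members are uniformly bounded and Lipschitz by Proposition~\ref{prop:lip} — the empirical and population $L_2$ covering numbers agree up to constants. A second, minor point is integrability of $\sqrt{\log N}$ near $0$, which holds in our case thanks to the polynomial covering estimate $\log N(\varepsilon,\mathcal{F}_z,L_2)=O\!\big(d\log(r/\varepsilon)\big)$ underlying Proposition~\ref{prop:bracketnum}; measurability of the suprema is handled as usual by reducing to the countable nets $T_j$.
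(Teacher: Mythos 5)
The paper does not prove this statement at all: it is imported verbatim from \citet{vershynin2018high} as a known result, and the only place the paper ``uses'' it is in the proof of Proposition~\ref{prop:rademacher-fz}, where the $L_2(P)$ covering number is immediately upper-bounded by the bracketing number $N_{[]}(2\varepsilon,\cF,L_2(P))$ (citing \citet{van1996weak}, p.~84) and then by the covering number of the ball. Your reconstruction is the standard dyadic chaining argument --- sub-Gaussianity of the symmetrized increments with variance proxy $\tfrac1n\|f-g\|_{L_2(P_n)}^2$, telescoping through nets $T_j$ at scales $2^{-j}D$, Massart's finite-class lemma at each level, and summation into the entropy integral --- and it is correct. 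The one substantive issue is exactly the one you flag: chaining natively produces the \emph{empirical} $L_2(P_n)$ entropy integral, and the theorem as stated in the paper (with the population metric $L_2(P)$) is not literally the textbook Dudley bound. Your second resolution is the right one for this paper, but the mechanism deserves one more sentence of precision: it is not boundedness plus Lipschitzness in $x$ that equates the two covering numbers, but the fact that $\cF_z$ is Lipschitz in the \emph{parameter} $c$ uniformly in $x$ (the second half of Proposition~\ref{prop:lip}), so that $\|f_{z,c}-f_{z,c'}\|_{L_2(Q)}\leq\|f_{z,c}-f_{z,c'}\|_{\infty}\leq L\|c-c'\|$ for \emph{every} measure $Q$ simultaneously, and hence both $N(\varepsilon,\cF_z,L_2(P_n))$ and $N(\varepsilon,\cF_z,L_2(P))$ are dominated by $N(\varepsilon/L,\sphere(z,r),\|\cdot\|)$. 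This is in effect the same device the paper uses via Theorem~\ref{thm:van-lip}, just phrased with $L_\infty$ covers instead of brackets; either route yields the $O(\sqrt{d/n})$ rate of Proposition~\ref{prop:rademacher-fz}, and your observation that the integrand vanishes for $\varepsilon$ beyond the diameter correctly disposes of the improper upper limit.
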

    
    Combining Proposition~\ref{prop:bracketnum} and Dudley's theorem just above, we obtain the following result:
    \begin{proposition}\label{prop:rademacher-fz}
    The Rademacher complexity of $\cF_z$ with respect to some distribution $P$ is of order:
    \begin{equation*}
        R_{P,n}(\mathcal{F}_z) = O\left(\sqrt{\frac{d}{n}}\right)
    \end{equation*}
    \end{proposition}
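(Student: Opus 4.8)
The plan is to convert the $L_1(P)$-bracketing bound of Proposition~\ref{prop:bracketnum} into an $L_2(P)$-covering bound and then invoke Dudley's chaining inequality (Theorem~\ref{thm:dudley}). Two elementary observations drive the conversion. First, every $f_{z,c}\in\cF_z$ takes values in $[-1,0]$, so without loss of generality the endpoints of any bracket may be truncated to $[-1,0]$ as well; consequently $|u_i-l_i|\le 1$ pointwise and the $L_2(P)$-diameter of $\cF_z$ is at most $1$, a bound free of $d$, $r$ and $s$. Second, for a function $g$ with $|g|\le 1$ one has $\|g\|_{L_2(P)}^2=\int g^2\,dP\le\int|g|\,dP=\|g\|_{L_1(P)}$, hence $\|g\|_{L_2(P)}\le\sqrt{\|g\|_{L_1(P)}}$. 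Combining these, an $\varepsilon^2$-bracket of $\cF_z$ in $L_1(P)$ is an $\varepsilon$-bracket in $L_2(P)$, and since the lower endpoints of any bracket yield a cover of the same radius, $N(\varepsilon,\cF_z,L_2(P))\le N_{[]}(\varepsilon,\cF_z,L_2(P))\le N_{[]}(\varepsilon^2,\cF_z,L_1(P))$. Feeding in Proposition~\ref{prop:bracketnum}, for $\varepsilon$ below the diameter this gives $\log N(\varepsilon,\cF_z,L_2(P))\le d\log(Cr/\varepsilon^2)=O\!\left(d\log(r/\varepsilon)\right)$, while $N(\varepsilon,\cF_z,L_2(P))=1$ once $\varepsilon$ exceeds the diameter $D\le 1$.

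Next I would substitute this into Dudley's integral. Since the integrand vanishes for $\varepsilon\ge D$, the integral truncates at $D$ and
\begin{align*}
R_{P,n}(\cF_z)
&\le C\int_0^{D}\sqrt{\frac{\log N(\varepsilon,\cF_z,L_2(P))}{n}}\,d\varepsilon
\le \frac{C}{\sqrt n}\int_0^{D}\sqrt{d\,\log(C'r/\varepsilon)}\,d\varepsilon \\
&= C\,\sqrt{\frac{d}{n}}\int_0^{D}\sqrt{\log(C'r/\varepsilon)}\,d\varepsilon .
\end{align*}
The remaining integral is finite: after the substitution $\varepsilon=C'r\,u$ it becomes $C'r\int_0^{D/(C'r)}\sqrt{\log(1/u)}\,du$, bounded by $C'r\int_0^{1}\sqrt{\log(1/u)}\,du<\infty$; crucially it depends only on $r$ (and, through the bracketing constant, on $s$), but not on $n$ or $d$. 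This yields $R_{P,n}(\cF_z)=O(\sqrt{d/n})$, as claimed.

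The truncated Dudley integral is a routine computation; the point that needs care is ensuring that the factor $\sqrt d$ genuinely comes out front — that is, that neither the truncation level $D$ nor the leftover constant integral hides a dimension dependence — which is exactly why I first pinned the $L_2(P)$-diameter of $\cF_z$ down by the uniform bound $|f_{z,c}|\le 1$. One can also bypass the $L_1\to L_2$ passage entirely and argue directly from Proposition~\ref{prop:lip}: the pointwise estimate $|f_{z,c}(x)-f_{z,c'}(x)|\le L\|c-c'\|$ makes $c\mapsto f_{z,c}$ an $L$-Lipschitz map from $\bbS(z,r)$ into $(\cF_z,\|\cdot\|_{L_2(P)})$, so by Lemma~\ref{lem:ballcover} an $(\varepsilon/L)$-net of the sphere of cardinality $O\!\left((Lr/\varepsilon)^d\right)$ induces an $\varepsilon$-net of $\cF_z$; this gives the same $\log N(\varepsilon,\cF_z,L_2(P))=O(d\log(r/\varepsilon))$ bound and hence the same conclusion through Dudley.
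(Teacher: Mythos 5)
Your argument is correct and follows essentially the same route as the paper: bound the $L_2(P)$ covering number of $\cF_z$ by the covering number of the sphere $\bbS(z,r)$ (via the Lipschitz dependence on $c$), then integrate Dudley's bound over the truncated range to pull out the $\sqrt{d/n}$ factor. Your primary conversion from $L_1$-bracketing to $L_2$-covering via the boundedness trick $\|g\|_{L_2(P)}\le\sqrt{\|g\|_{L_1(P)}}$ is a harmless variant of the paper's direct use of the $L_2$ bracketing-to-covering relation, and the alternative you sketch at the end (an $(\varepsilon/L)$-net of the sphere inducing an $\varepsilon$-net of $\cF_z$) is precisely the paper's argument.
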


    Using tools from, e.g. the book of ~\citep{vershynin2018high}, it is well-known that the Rademacher complexity, under some bounding assumption, admits subgaussianity concentration:
    
    \begin{proposition}
    \label{prop:bounded-rademacher}
        For $\mathcal{F}$ a space of functions that are bounded by $b$ (in $||\cdot||_{\infty}$)
        \begin{equation*}
            \bbP(\sup_{f\in \mathcal{F}} \lvert \mathbb{E}_{\bX}[f(X)]- \mathbb{E}_{X_n}[f(X)] \rvert>\Radem_{\bX,n}(\mathcal{F})+t)\leq 2 e^{-\frac{nt^2}{2b^2}}
        \end{equation*}
    \end{proposition}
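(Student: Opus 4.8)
The plan is the textbook two-ingredient argument: control the \emph{expected} uniform deviation by the Rademacher complexity through symmetrization, then promote this to an exponential tail bound via McDiarmid's bounded differences inequality, which is available precisely because $\|f\|_\infty \le b$ for every $f \in \mathcal F$. Write $X_{1:n} = (X_1,\dots,X_n) \sim \bX^{\otimes n}$ and abbreviate $P f = \bbE_\bX[f(X)]$, $P_n f = \tfrac1n\sum_{i=1}^n f(X_i)$. Since $\sup_{f\in\mathcal F}\lvert P f - P_n f\rvert = \max\bigl(g^+(X_{1:n}),\, g^-(X_{1:n})\bigr)$ with $g^+ \eqdef \sup_{f\in\mathcal F}(Pf - P_n f)$ and $g^- \eqdef \sup_{f\in\mathcal F}(P_n f - Pf)$, it suffices to bound $g^+$ and $g^-$ separately and take a union bound — this is exactly where the factor $2$ in the stated inequality comes from.

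For the symmetrization step I would introduce an independent ghost sample $X'_{1:n}$ with the same law, use $P f = \bbE_{X'}[\tfrac1n\sum_i f(X'_i)]$ together with Jensen's inequality to pull the supremum out of the ghost expectation, and then insert i.i.d.\ Rademacher signs $\sigma_i$: because $(X_i, X'_i)$ is exchangeable, $(f(X'_i) - f(X_i))_i$ has the same law as $(\sigma_i(f(X'_i) - f(X_i)))_i$, and splitting the sum gives $\bbE[g^+] \le 2\,\Radem_{\bX,n}(\mathcal F)$; by the symmetry $\sigma_i \overset{d}{=} -\sigma_i$ the same bound holds for $\bbE[g^-]$. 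The universal constant $2$ here is harmless: it can be absorbed into the definition of $\Radem$, matching the form of the statement, and the exponential rate below is insensitive to it.

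For the concentration step, note that replacing one coordinate $X_j$ by an arbitrary $\tilde X_j$ moves $P_n f$, uniformly in $f$, by at most $2b/n$, hence changes each of $g^+, g^-$ by at most $2b/n$. McDiarmid's bounded differences inequality with increments $c_j = 2b/n$ (see e.g.\ \citep{vershynin2018high}) then gives $\bbP(g^{\pm} - \bbE[g^{\pm}] > t) \le \exp(-2t^2 / \sum_j c_j^2) = \exp(-nt^2/(2b^2))$ for every $t > 0$. Combining with the symmetrization bound $\bbE[g^\pm] \le \Radem_{\bX,n}(\mathcal F)$ (after absorbing the constant) and a union bound over the two events yields
$$ \bbP\Bigl(\sup_{f\in\mathcal F}\lvert Pf - P_n f\rvert > \Radem_{\bX,n}(\mathcal F) + t\Bigr) \le \bbP\bigl(g^+ > \Radem_{\bX,n}(\mathcal F) + t\bigr) + \bbP\bigl(g^- > \Radem_{\bX,n}(\mathcal F) + t\bigr) \le 2\,e^{-nt^2/(2b^2)}, $$
which is the claim. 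The only genuinely delicate points are bookkeeping ones: getting the symmetrization right for the two one-sided suprema and keeping track of where the universal constant lands relative to the paper's normalization of $\Radem$, together with the (standard, easily dispatched) measurability of $g^\pm$, which holds because $\bbS(z,r)$ is separable and $c \mapsto f_{z,c}(x)$ is continuous, so the suprema may be restricted to a countable dense set of centers. Everything else is a direct application of McDiarmid with increment $2b/n$.
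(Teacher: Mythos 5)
Your argument is correct and follows essentially the same route as the paper: symmetrization to control the expected uniform deviation by the Rademacher complexity, then McDiarmid's bounded differences inequality. The only organizational difference is that you split $\sup_f|Pf-P_nf|$ into the two one-sided suprema $g^{\pm}$ and union-bound, obtaining the prefactor $2$ that way, whereas the paper applies McDiarmid directly to the two-sided supremum $\phi$; both work, and you are in fact the more careful of the two on the increment (with $\|f\|_\infty\le b$ the correct bounded-difference constant is $2b/n$, which yields the stated exponent $-nt^2/(2b^2)$, whereas the paper's sketch says $b/n$). The one point you should not wave away is the symmetrization constant: the standard argument gives $\bbE[g^{\pm}]\le 2\,\Radem_{\bX,n}(\mathcal F)$, not $\le\Radem_{\bX,n}(\mathcal F)$, so what is actually proved is the deviation bound at threshold $2\,\Radem_{\bX,n}(\mathcal F)+t$; "absorbing the $2$ into the definition" does not prove the proposition as literally stated with the paper's normalization of $\Radem$. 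This is a looseness the paper's own proof shares (it records $\bbE[\phi]\le 2\,\Radem_{\bX,n}(\mathcal F)$ and then states the bound with $\Radem_{\bX,n}(\mathcal F)$), and it is harmless for every downstream use here since $\Radem_{\bP_\bX,n}(\cF_z)=O(\sqrt{d/n})$ either way, but a clean write-up should either state the bound with $2\,\Radem$ or flag the convention explicitly.
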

    Combining Proposition~\ref{prop:bounded-rademacher} with \eqnref{eq:depth2f} leads to the concentration bound of \eqnref{eq:concentration}.
%

\paragraph{Proofs for continuous distributions with bounded supports}
    The concentration bound of \eqnref{eq:concentration} can be extended from a single point to points on a bounded set under some assumptions. To achieve this, we first need some intermediary result:

    \begin{proposition}\label{prop:lipschitz-depth}
        The Sphere depth $z \to \SphereD_s(z|\bX)$ is Lipschitz with respect to $z$.
    \end{proposition}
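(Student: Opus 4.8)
The plan is to bound the variation of $\SphereD_s(z|\bX)$ when $z$ moves to $z'$, by exploiting the Lipschitzness of the objective function $f_{z,c}$ already established in Proposition~\ref{prop:lip}, together with the fact that changing the center $z$ only translates the feasible sphere $\sphere(z,r)$. Concretely, fix $z, z' \in \bbR^d$. Any admissible center $c \in \sphere(z,r)$ can be transported to $c' = c + (z'-z) \in \sphere(z',r)$, and conversely; this gives a bijection between the two constraint sets that moves each point by exactly $\|z'-z\|$. So I would write, for the infimum over $\sphere(z',r)$, a comparison using the transported center: for each $c\in\sphere(z,r)$,
\begin{equation*}
  \bigl|\bbE[sig_s(r^2-\|X-c\|^2)] - \bbE[sig_s(r^2-\|X-c'\|^2)]\bigr| \leq \bbE\bigl|sig_s(r^2-\|X-c\|^2) - sig_s(r^2-\|X-c'\|^2)\bigr|.
\end{equation*}

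The key step is then to control the integrand uniformly. Here I would invoke the ``symmetric'' half of Proposition~\ref{prop:lip}: $c \mapsto f_{z,c}(x)$ is Lipschitz with a constant $L$ independent of $x$ (the gradient in $c$ of $sig_s(r^2-\|x-c\|^2)$ is bounded, since $|sig_s'|\le \frac{1}{4s}$ and $\|\nabla_c \|x-c\|^2\| = 2\|x-c\| \le 2\,\mathrm{diam}$ over the relevant region — or simply because the composition of a $\frac{1}{4s}$-Lipschitz function with a locally Lipschitz quadratic is controlled on the bounded set where it is not already flat). This yields $\bigl|sig_s(r^2-\|x-c\|^2) - sig_s(r^2-\|x-c'\|^2)\bigr| \le L\,\|c-c'\| = L\,\|z-z'\|$ for every $x$, hence after taking expectations and then the infimum over $c$, $|\SphereD_s(z|\bX) - \SphereD_s(z'|\bX)| \le L\,\|z-z'\|$. (The standard fact $|\inf_A g - \inf_A h| \le \sup_A |g-h|$ under the bijection does the bookkeeping.)

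The main obstacle — and the place I would be careful — is justifying a global Lipschitz constant in $c$ for the map $c \mapsto sig_s(r^2 - \|x-c\|^2)$ that does not blow up as $\|x\|\to\infty$: the quadratic $\|x-c\|^2$ has gradient growing linearly in $\|x-c\|$, so naively $L$ is unbounded. The resolution is that where $\|x-c\|$ is large, $r^2-\|x-c\|^2$ is very negative and $sig_s'(r^2-\|x-c\|^2)$ decays exponentially, killing the linear growth; so the product $|sig_s'(r^2-\|x-c\|^2)|\cdot 2\|x-c\|$ is bounded by a constant depending only on $s$ and $r$. This is exactly (the content behind) Proposition~\ref{prop:lip}, so I would simply cite it and state the resulting constant, deferring the elementary one-variable estimate $\sup_{t\ge 0} t\,sig_s'(r^2-t^2) < \infty$ to the appendix. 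With that in hand the proof is a couple of lines; no further machinery (covering numbers, Rademacher complexity) is needed for this proposition.
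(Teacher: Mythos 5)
Your proposal is correct and follows essentially the same route as the paper: translate each candidate center $c\in\sphere(z,r)$ by $z'-z$ to get a bijection with $\sphere(z',r)$, apply the uniform-in-$x$ Lipschitzness of $c\mapsto f_{z,c}(x)$ from Proposition~\ref{prop:lip}, and pass to the infimum. Your remark about why the Lipschitz constant does not blow up as $\|x-c\|\to\infty$ (exponential decay of $sig_s'$ beating the linear growth of the gradient of the quadratic) is precisely the content of the paper's proof of Proposition~\ref{prop:lip}, so citing it as you do is exactly right.
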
  
    \begin{proofsketch}
        For another point $z'$ obtained from $z$ by translation by $h=z'-z$, any point $c'\in \bbS(z',r)$ corresponds to a translation by $h$ of a point $c \in \bbS(z,r)$. Then we use the Lipschitzness of the function $f_z,c$ with respect to $c$ by Proposition~\ref{prop:lip}, and pass to the infinum to conclude.
    \end{proofsketch}
  
    The Lipschitzness and the concentration with subgaussian rate of \eqnref{eq:concentration} allow to extend the consistency to a bounded set of $\bbR^d$ via a covering of it. 

    \begin{corollary}\label{cor:bounded-set}
        For a bounded set $K \subset \mathbb{R}^d$, $\bX$ absolutely continuous:
        \begin{align*}
            \bbP(\sup_{z\in K}|\SphereD_s(z|X_n)-\SphereD_s(z|\bX)|>R_{P_X,n}(\mathcal{F}_z)+t+2L\varepsilon)\\
            \leq N(\varepsilon,K,||\cdot||_2)2 e^{-\frac{nt^2}{2}} 
        \end{align*}
        where $L$ is the constant of Lipschitzness of Proposition~\ref{prop:lipschitz-depth}.
    \end{corollary}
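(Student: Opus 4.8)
The plan is a standard $\varepsilon$-net argument: promote the pointwise concentration bound \eqref{eq:concentration} to a statement that is uniform over $K$, using the Lipschitz continuity of $z\mapsto\SphereD_s(z|\cdot)$ from Proposition~\ref{prop:lipschitz-depth} to move between a generic point of $K$ and the nearest net point. First I would fix $\varepsilon>0$ and take a minimal $\varepsilon$-covering $\{z_1,\dots,z_N\}$ of $K$ in $\|\cdot\|_2$, so that $N=N(\varepsilon,K,\|\cdot\|_2)<\infty$ because $K$ is bounded; for $z\in K$, let $z_{j(z)}$ denote a center with $\|z-z_{j(z)}\|\le\varepsilon$.

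The key point --- and the only one that is not purely mechanical --- is that the Lipschitz constant $L$ of Proposition~\ref{prop:lipschitz-depth} does not depend on the underlying measure. Indeed, $f_{z,c}$ depends on $c$ alone, and by Proposition~\ref{prop:lip} its gradient in $c$ is bounded by a constant $L=L(r,s)$ uniformly in $x$ and $c$; integrating against an arbitrary measure $\mu$ preserves that bound, so $c\mapsto\bbE_\mu[f_{z,c}(X)]$ is $L$-Lipschitz, and since $\bbS(z',r)=\bbS(z,r)+(z'-z)$, passing to the infimum gives $|\SphereD_s(z'|\mu)-\SphereD_s(z|\mu)|\le L\|z'-z\|$ for \emph{every} $\mu$ --- in particular for $\mu=\bX$ and for $\mu$ the empirical distribution of $\samples$. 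Routing through $z_{j(z)}$ with the triangle inequality then gives
\begin{align*}
|\SphereD_s(z|\samples)-\SphereD_s(z|\bX)| &\le |\SphereD_s(z_{j(z)}|\samples)-\SphereD_s(z_{j(z)}|\bX)| + 2L\varepsilon \qquad (z\in K),\\
\sup_{z\in K}|\SphereD_s(z|\samples)-\SphereD_s(z|\bX)| &\le \max_{1\le j\le N}|\SphereD_s(z_j|\samples)-\SphereD_s(z_j|\bX)| + 2L\varepsilon.
\end{align*}

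It then remains to handle the finite maximum. I would bound each Rademacher term $R_{P_X,n}(\cF_{z_j})$ by $\sup_{z\in K}R_{P_X,n}(\cF_z)$ --- still of order $\sqrt{d/n}$ by Proposition~\ref{prop:rademacher-fz} and denoted $R_{P_X,n}(\cF_z)$ below, as in the statement --- then apply \eqref{eq:concentration} at each $z_j$ and take a union bound over $j=1,\dots,N$:
\begin{align*}
&\bbP\!\left(\max_{1\le j\le N}|\SphereD_s(z_j|\samples)-\SphereD_s(z_j|\bX)| > R_{P_X,n}(\cF_z)+t\right)\\
&\qquad\le N(\varepsilon,K,\|\cdot\|_2)\,2 e^{-\frac{nt^2}{2}}.
\end{align*}
By the reduction displayed above, the event $\{\sup_{z\in K}|\SphereD_s(z|\samples)-\SphereD_s(z|\bX)|>R_{P_X,n}(\cF_z)+t+2L\varepsilon\}$ is contained in the event on the left-hand side, so the claim follows. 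There is no serious obstacle here; the one subtlety that must be checked is the measure-independence of $L$. Boundedness of $K$ is used only to keep the covering number finite, and absolute continuity of $\bX$ is not needed for this step --- it enters only afterwards, when one sends $\varepsilon=\varepsilon_n\to0$ and tunes $t=t_n$ so that $N(\varepsilon_n,K,\|\cdot\|_2)\,e^{-nt_n^2/2}\to0$, recovering \eqref{eq:consistency_support} with $K=supp(\bX)$.
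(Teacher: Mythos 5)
Your proof is correct and follows essentially the same route as the paper: an $\varepsilon$-net of $K$, the Lipschitz continuity of $z\mapsto\SphereD_s(z|\cdot)$ from Proposition~\ref{prop:lipschitz-depth} to pay $2L\varepsilon$ for moving to the nearest net point, the pointwise bound \eqref{eq:concentration} at each center, and a union bound. You additionally make explicit two details the paper leaves implicit --- that the Lipschitz constant is measure-independent (so the bound applies to both the empirical and population depths) and that the $z$-dependence of $R_{P_X,n}(\cF_{z_j})$ must be absorbed into a supremum --- which is a welcome clarification rather than a deviation.
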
    

    In particular if $\bX$ has a bounded support, the consistency is obtained for all the support, and the last result of \eqnref{eq:consistency_support} follows. 

\section{FAST OPTIMIZATION SOLVER}
\label{s:algorithm}

     In practice, given samples $x_1, \dots, x_n$, the proposed depth of \eqnref{eq:sigmoid_sphere_depth} amounts to:
    \begin{equation}
        \label{eq:empirical_sphere_depth}
        \SphereD_s(z | \samples) = \inf_{c\in \bS(z, r)}  \frac{1}{n} \sum_{i=1}^n sig_s\left(r^2 - \|x_i - c\|^2\right)\enspace,
     \end{equation}
     which up to a change of variable, can be written as:
    \begin{equation}
        \label{eq:empirical_sphere_depth_u}
        \SphereD_s(z | \samples) = \inf_{\|u\|=1}  \frac{1}{n} \sum_{i=1}^n sig_s\left(r^2 - \|x_i - z - ru\|^2\right)\enspace,
     \end{equation}
We propose to solve \eqref{eq:empirical_sphere_depth_u} through Riemannian gradient descent on the unit sphere. 

\paragraph{Riemannian gradient descent}
Let $\cL: u \mapsto \sum_{i=1} sig_s\left(r^2 - \|x_i - z - ru\|^2\right)$ denote the loss function. Let $u \in \sphere(0, 1)$, let $\cT_u$ denote the tangent space of $\sphere(0, 1)$ at $u$ and $\pi_{\cT_u}$ the orthogonal projection on $\cT_u$. To compute a descent direction in $\cT_u$ we compute the tangent component of the gradient which is formally given by: $\pi_{\cT_u}(\nabla \cL(u)) = \nabla \cL(u) - \langle \nabla \cL(u), u\rangle u$. To perform a descent step with step size $\alpha$ on the sphere towards the descent direction $v\eqdef -\frac{\pi_{\cT_u}(\nabla \cL(u))}{\|\pi_{\cT_u}(\nabla \cL(u))\|}$, we follow the geodesic between $u$ and $v$ which is given by the exponential map:
\begin{equation}
    \label{eq:exp_map}
    \exp_{u, v}: \alpha \in [0, \pi] \mapsto \cos(\alpha)u + \sin(\alpha) v
\end{equation}

Setting a proper step-size $\alpha$ can be difficult in practice. To tackle this issue, we initialize $\alpha = \pi$ and decrease it for every step where $\cL(u)$ does not decrease. Moreover, due to the non-convex nature of the problem, the output of the optimization solver depends on the initialization. In practice, we noticed that initializing $u$ to the normalized mean of the samples $\samples$ provides a stable behavior of the solver and does not require any further tuning. The full procedure is described in Algorithm~\ref{alg:grad-ssd}.

    \begin{algorithm}[hbt!]
    \caption{Riemannian gradient descent}\label{alg:grad-ssd}
    \KwIn{$z,\samples, tol, \alpha $}
    \KwResult{$d\eqdef \SphereD_s(z|\samples)$} \;
    Initialize $u = \frac{1}{n}\sum_{i=1}^n x_i $ 
    \;\\ 
    $u \gets u/\|u\|$ \\
    $d \gets \cL(u) $\;\\
    \For{$i=1 \, \emph{\KwTo}$ \, $n_{iter}$}{
        \;$ v \gets -\nabla_u \cL(u)$\; \\
        \;$ v \gets v - \langle u,v\rangle u $\; \\
        $v \gets v/\|v\|$ \\
        \;$ u \gets \cos(\alpha)u + \sin(\alpha) v $ \; \\
        \;$d' \gets \cL(u) $ \; \\
        \;$dist = |d'-d| $ \; \\
         \lIf{$d' > d$}{$\alpha \gets \alpha / 2$}\;
         \lElseIf{$dist<tol$}{break}
   \;
        \lElse{$d \gets d'$}{}

    }
    \Return $d$
    \end{algorithm}

    \paragraph{Scalability for large $n$}
    To evaluate the computation speed-up gained by relaxing the original non-smooth optimization problem via the sigmoid function, we simulate a random centered and standard multivariate Gaussian in $\bbR^3$ and compute the depth of $z = [10, 10, 10]$ for both the halfspace and sphere depth. For HD, we use the zero-order Nelder-Mead algorithm. For $\SphereD_s$ we use Algorithm \ref{alg:grad-ssd}. Figure \ref{fig:speed} shows how the propose manifold gradient descent on the sphere is significantly more scalable.

    \begin{figure}[hbt!]
    \includegraphics[width=\linewidth]{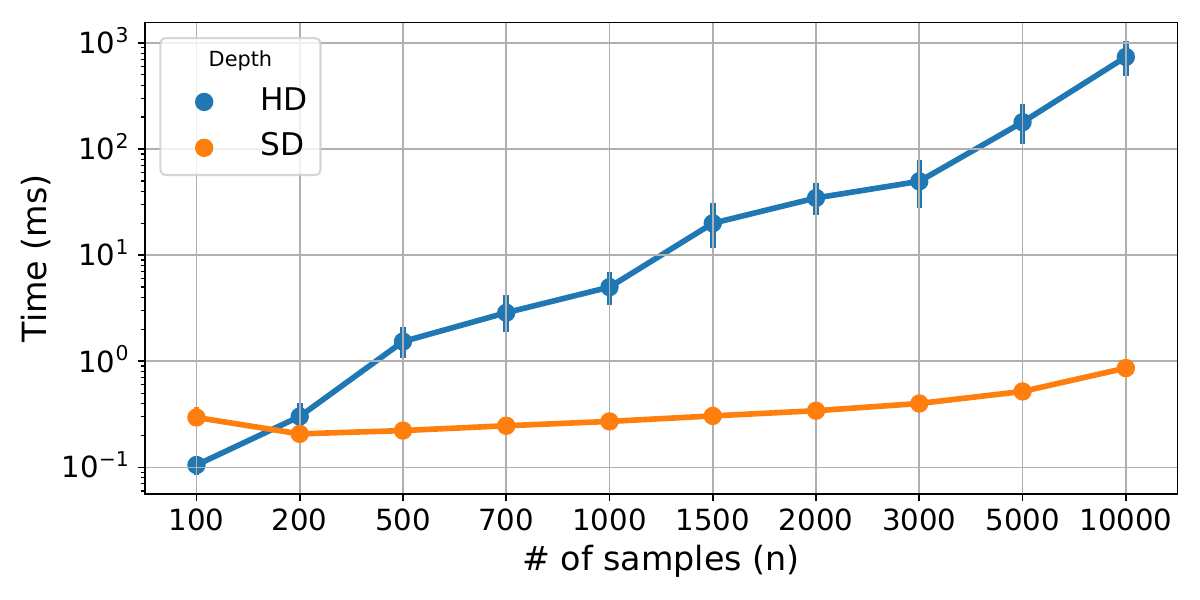}
    \caption{Illustration of the scalability of the proposed $\SphereD_s$ computed using Alg. \ref{alg:grad-ssd} compared to gradient-free optimization based HD\label{fig:speed}.}
    \end{figure}
main
\section{APPLICATIONS AND NUMERICAL EXPERIMENTS}

    \subsection{Simulated multimodal distributions}\label{subsec:simulated-ranks}
    

        To confirm the order-correctness of the proposed depth notion, we conducted a number of rank-based comparisons with the true density. 

         To test the handling of multimodality, we carried out 50 independent simulations with a distribution of two Gaussians $\mathcal{N}([-3.5,-3.5],I_d)$ and $\mathcal{N}([3.5,3.5],I_d)$, computing the rank correlations between the ranks of the data points obtained by depths computation and the ranks by the true density, and it shows that the Sphere depth's ordering was closer to the one of the true density than LSPD (localised spatial depth). For LSPD, we set the kernel bandwidth parameter $h=1$ to be at the crossing of its two regimes, and for $\SphereD_s$ we used $r=1,s=1$. 
         We show the impact of the dimension for a range $d=2,4,6,8$, for $n=200$ samples in Figure~\ref{fig:dgrowing} using Spearman rank correlation. We also illustrate the consistence by the convergence of the ordering as $n$ grows from $100$ to $1000$ for 
         $d=10$ in Figure~\ref{fig:ngrowing}. In section~\ref{app:exp} of the Appendix, similar graphs for the Kendall tau rank correlation are displayed.
                \begin{figure}[h]
        \centering\includegraphics[width=\linewidth]{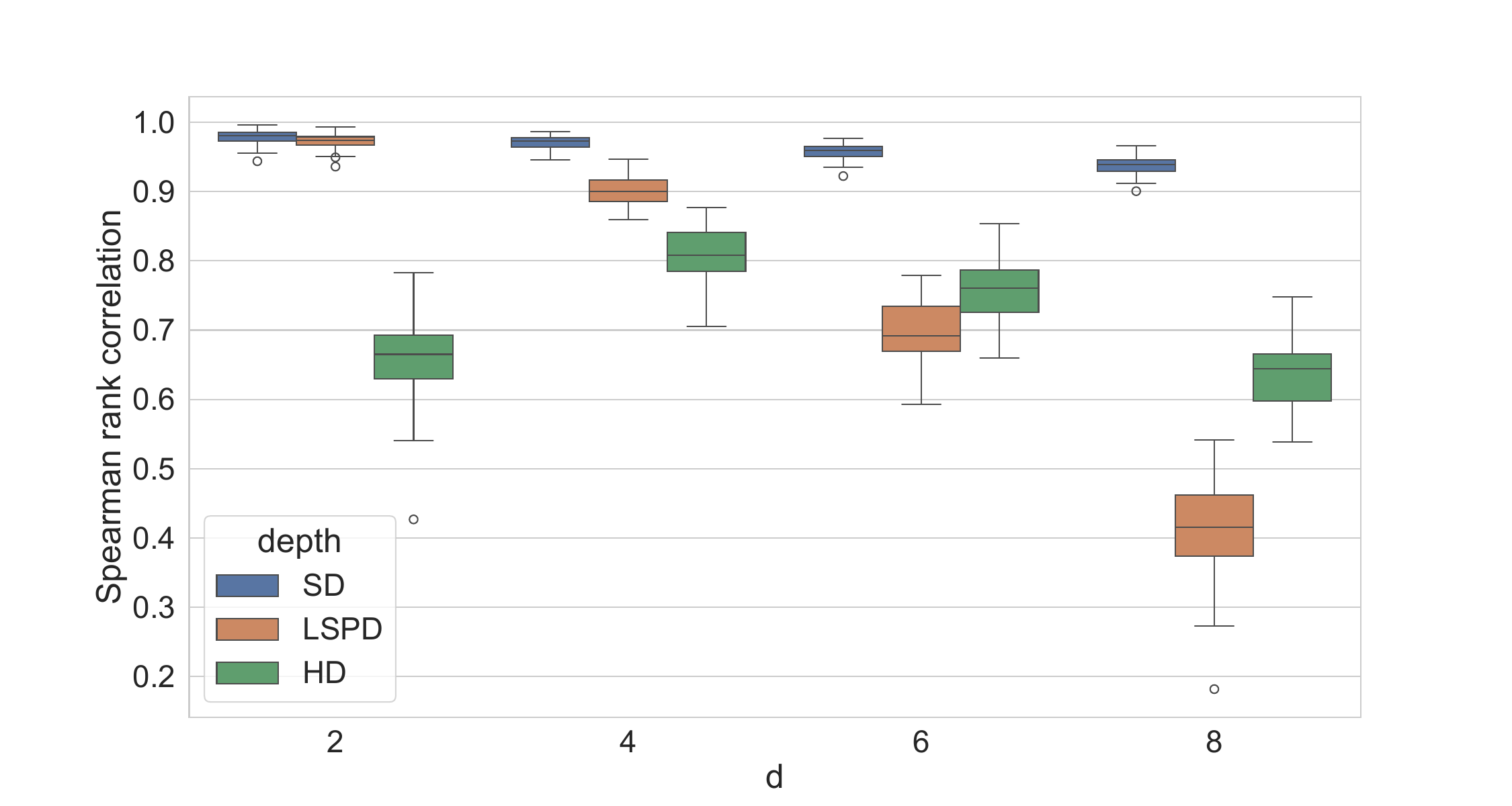}
                         \caption{Bi-Gaussian $SD^1_1$ ($r,s=1$) vs LSPD ($h=1$) Spearman correlation ranking w.r.t. true density (50 runs)}
                     \label{fig:dgrowing}
                \end{figure}

                \begin{figure}[h]
        \centering\includegraphics[width=\linewidth]{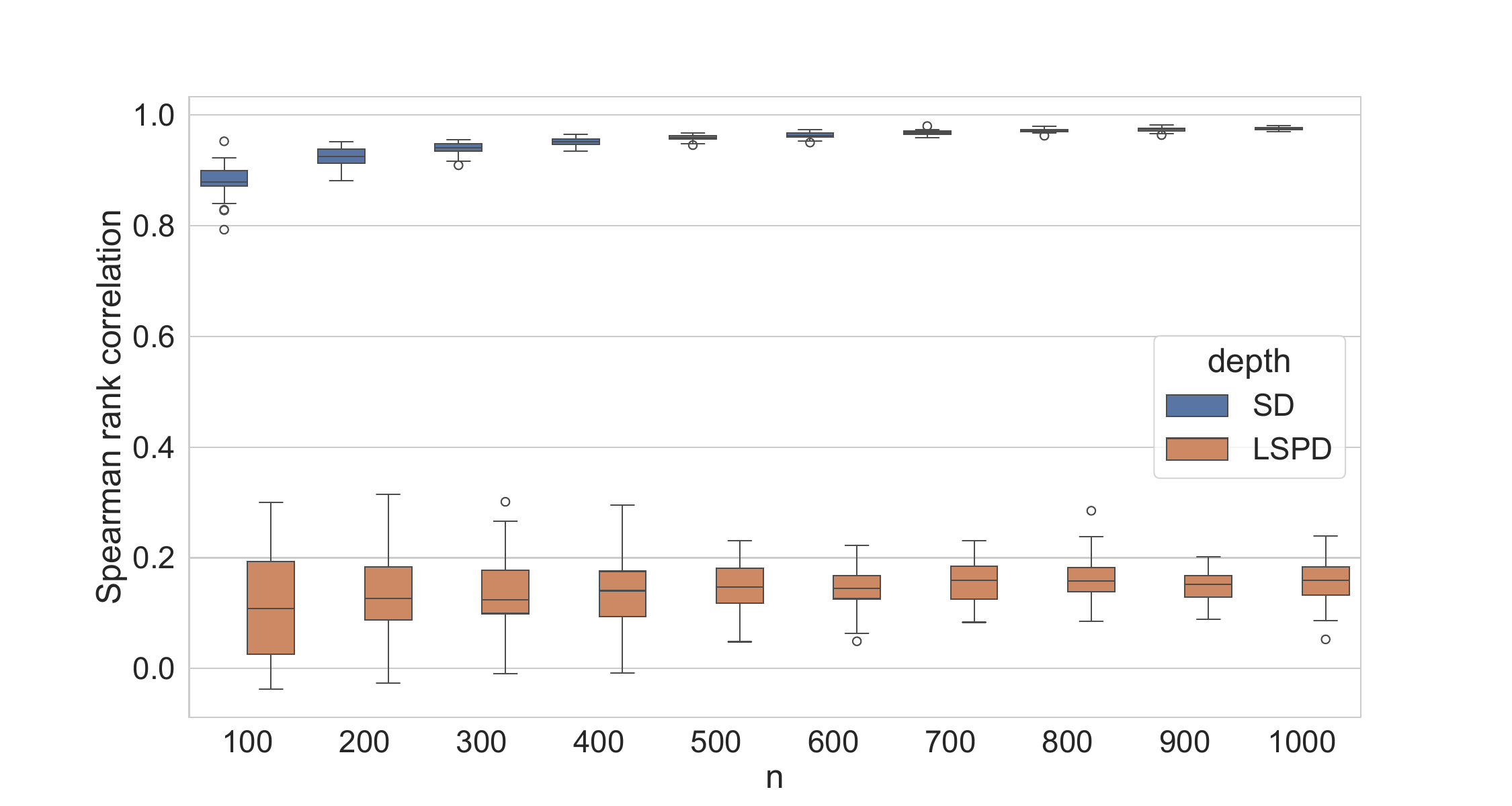}
                         \caption{Convergence of the Spearman rank correlation w.r.t. the true density for a bi-Gaussian distribution d = 10}
                         \label{fig:ngrowing}
                \end{figure}

    \subsection{Application to homogeneity tests}\label{subsec:homogeneity-tests}

Data depth can serve to construct a homogeneity test. Indeed, \citet{liu1993quality} introduced a quality index $Q$ defined with respect to a data depth $D$ and some probability distributions $\bF$, $\bG$ as:
    \begin{equation}
        Q(\bF,\bG)=\bbP(D(X|\bF)\leq D(Y|\bF)|X\sim~\bF, Y\sim~\bG).
    \end{equation}
 
    In the case $\bF=\bG$, for non-trivial cases $Q(\bF,\bG)=1/2$: indeed, since the depth value distributions are the same, picking two values at random there will be probability one half that one is less than the other (assuming the data depth and the distributions considered allow to exclude equality situations with non-negligible density). This allows under suitable conditions to use the quality index for homogeneity testing.

    \citet{zuo2006limiting} exhibits some set of such conditions that guarantees the difference between the empirical quality index $Q(\bF_n,\bG_m)$ and the population quality index to converges asymptotically to a normal distribution (after rescaling by some term of order $\Theta(\sqrt{n+m})$).

    Here, we assume we are under the null hypothesis with two sets of independent samples $X_{1:n}$ and $Y_{1:m}$ coming from the same distribution $\bF$ and only prove the asymptotic convergence of the quality index in that case in order to get the asymptotic behaviour of testing against the null hypothesis.
    Since the quality index is only computed on the elements coming from the distribution, and we assume we have only one distribution to consider under the null hypothesis, we relax the assumptions of the work of \citet{zuo2006limiting}  (in particular by taking suprema on the support of $\bF$ instead of all $\mathbb{R}^d$).

    We prove a result similar of that of \citet{zuo2006limiting} in the case of the null hypothesis for a probability distribution $\bX$ and a depth $D$ verifying the following:
    \begin{itemize}
        \item (A1) $\bbP(d_1\leq D(Z|\bX)\leq d_2)\leq C|d_2-d_1|$ for some $C$ and any $d_1,d_2\in [0,1], Z\sim \bX$
        \item (A2) $\sup_{z\in supp(\bX)} |D(z|\samples)-D(z|\bX)|=o(1)$ almost surely as $n\to\infty$
        \item (A3) $\mathbb{E}(\sup_{supp(\bP_{\bX})}|D(z|\samples)-D(z|\bX)|)= O\left( \frac{log(n)}{\sqrt{n}} \right)$,
    \end{itemize}
    see Appendix, Section~\ref{app:proofs}.
    In the work of \citet{zuo2006limiting}, for most depths, (A1) is generally assumed.

    We show that in the case of the Sphere Sigmoid depth, 
    (A2) and (A3) are fulfilled for absolutely continuous distribution $X$ with bounded support.

    Indeed, for such distribution, (A2) is proved by Theorem~\ref{thm:main}.
    We prove also that (A3) is fulfilled under the same hypotheses:

    \begin{proposition}\label{prop:A3}
        For $\bX$ absolutely continuous with bounded support:
        \begin{equation*}
            \mathbb{E}(\sup_{supp(\bX)}|\SphereD_s(z|\samples)-\SphereD_s(z|\bX))|)= O\left( \frac{\log(n)}{\sqrt{n}} \right)
        \end{equation*}
    \end{proposition}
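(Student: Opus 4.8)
The plan is to bound the expected supremum by reusing the machinery already assembled for Theorem~\ref{thm:main}, but now keeping track of the $\log n$ factor that appears when one combines a covering argument with the subgaussian tail. Since $\bX$ has bounded support, fix a bounded set $K \supseteq \mathrm{supp}(\bX)$ and recall from \eqnref{eq:depth2f} that $|\SphereD_s(z|\samples)-\SphereD_s(z|\bX)| \leq \sup_{f\in\cF_z}|\bbE_\bX[f]-\bbE_{\samples}[f]|$. First I would invoke the standard expected-supremum bound: for any fixed $z$, $\bbE\,\sup_{f\in\cF_z}|\bbE_\bX[f]-\bbE_{\samples}[f]| \leq 2\,R_{\bP_\bX,n}(\cF_z) = O(\sqrt{d/n})$ by Proposition~\ref{prop:rademacher-fz}. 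The point $\sqrt{d/n}$ is already $O(\log(n)/\sqrt n)$, so the real work is the \emph{uniformity over $z\in K$}, not the rate at a single point.

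Next I would use the Lipschitz property of $z\mapsto\SphereD_s(z|\bX)$ (Proposition~\ref{prop:lipschitz-depth}) together with the fact that the empirical depth $z\mapsto\SphereD_s(z|\samples)$ is Lipschitz with the \emph{same} constant $L$ (the argument in the proof sketch of Proposition~\ref{prop:lipschitz-depth} is purely geometric in $c$ and does not use that the measure is the population one). Hence $z\mapsto g_n(z)\eqdef|\SphereD_s(z|\samples)-\SphereD_s(z|\bX)|$ is $2L$-Lipschitz, so for an $\varepsilon$-net $\{z_1,\dots,z_N\}$ of $K$ with $N = N(\varepsilon,K,\|\cdot\|_2)$ one has $\sup_{z\in K} g_n(z) \leq \max_{j\leq N} g_n(z_j) + 2L\varepsilon$. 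Taking expectations,
\begin{equation*}
    \bbE\Big[\sup_{z\in K} g_n(z)\Big] \leq \bbE\Big[\max_{j\leq N} g_n(z_j)\Big] + 2L\varepsilon.
\end{equation*}
Now I would bound $\bbE[\max_{j\leq N} g_n(z_j)]$ via the subgaussian concentration of \eqnref{eq:concentration}: each $g_n(z_j)$ satisfies $\bbP(g_n(z_j) > R_{\bP_\bX,n}(\cF_{z_j})+t) \leq 2e^{-nt^2/2}$, and a maximal inequality for (sub)gaussian-type tails gives $\bbE[\max_j g_n(z_j)] \leq \max_j R_{\bP_\bX,n}(\cF_{z_j}) + C\sqrt{\tfrac{\log N}{n}}$ for a universal constant $C$. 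Since $R_{\bP_\bX,n}(\cF_{z_j}) = O(\sqrt{d/n})$ uniformly in $z_j$ (the bound in Proposition~\ref{prop:rademacher-fz} depends only on $d$ and $r$, not on the center $z$), this is $O(\sqrt{d/n} + \sqrt{\log N/n})$.

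Finally, choosing $\varepsilon = 1/n$ (so $2L\varepsilon = O(1/n)$, negligible) gives $N(\varepsilon,K,\|\cdot\|_2) = O(n^d)$ by Lemma~\ref{lem:ballcover} applied to the bounded set $K$, hence $\log N = O(d\log n)$ and
\begin{equation*}
    \bbE\Big[\sup_{z\in\mathrm{supp}(\bX)} g_n(z)\Big] = O\!\left(\sqrt{\tfrac{d\log n}{n}}\right) + O\!\left(\tfrac1n\right) = O\!\left(\tfrac{\log n}{\sqrt n}\right),
\end{equation*}
which is the claimed bound (with the $d$-dependence absorbed into the constant). The main obstacle I anticipate is the second step: one must verify carefully that the empirical depth is Lipschitz in $z$ with a constant not growing with $n$ — the geometric translation argument handles the $c$-parametrization, but one should confirm the infimum over $c\in\bbS(z,r)$ is attained (or approached) uniformly so that passing to the infimum preserves Lipschitzness — and one must make sure the Rademacher bound of Proposition~\ref{prop:rademacher-fz} is genuinely uniform over the centers $z_j\in K$, which it is because $\cF_z$ is, up to translation, the same function class for every $z$.
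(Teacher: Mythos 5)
Your proposal is correct and follows essentially the same route as the paper's proof: an $\varepsilon$-net of the (bounded) support, the Lipschitz property of $z\mapsto\SphereD_s(z|\cdot)$ to control the discretization error, the pointwise subgaussian concentration of \eqnref{eq:concentration} combined with a maximal inequality over the net to get the $\sqrt{\log N/n}$ term, and the uniform-in-$z$ Rademacher bound of Proposition~\ref{prop:rademacher-fz}. The only cosmetic difference is your choice $\varepsilon=1/n$ versus the paper's $\varepsilon=1/\sqrt{n}$, which yields the same $O(\log(n)/\sqrt{n})$ rate; your side remarks (translation-invariance of $\cF_z$ giving uniformity of the Rademacher bound, and Lipschitzness of the empirical depth with the same constant) are exactly the points the paper uses implicitly.
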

    The proof is provided in the Appendix, Section\ref{app:proofs}.

    With this, we get a more practical result for the proposed sphere depth $\SphereD_s$:
    \begin{theorem}\label{th:homogeneity-test}
        Let there be two sets of $n$ and $m$ independent samples respectively coming from a distribution $\bF$ absolutely continuous with bounded support and verifying (A1), each sets of samples with empirical distribution $\bF_n$ and $\bF'_m$ respectively. Then the quality index using $\SphereD_s$ verifies:
        \begin{equation*}
            \left(\frac{1}{12}(\frac{1}{n}+\frac{1}{m})\right)^{-1/2}(Q(\bF_n,\bF'_m)-Q(\bF,\bF))\to \mathcal{N}(0,1)
        \end{equation*}
         in distribution as the number of samples $n,m$ goes to infinity.
    \end{theorem}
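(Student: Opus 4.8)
The plan is to recognise $Q(\bF_n,\bF'_m)$ as, up to an asymptotically negligible perturbation, the classical two–sample Mann–Whitney statistic built from the depth values, and then to invoke the $U$–statistic central limit theorem; the constant $1/12$ will enter as the variance of a uniform random variable. \textbf{Step 1: the abstract hypotheses and the limiting constant.} First I would check that $\SphereD_s$ meets the appendix conditions (A1)–(A3) for $\bF$ absolutely continuous with bounded support: (A1) is the standing assumption on $\bF$, (A2) is exactly the uniform consistency \eqref{eq:consistency_support} of Theorem~\ref{thm:main}, and (A3) is Proposition~\ref{prop:A3}. A finite constant in (A1) forces $d\mapsto\mathbb{P}(\SphereD_s(Z|\bF)\le d)$ to be Lipschitz, hence atomless, so the common law $G$ of $\SphereD_s(X|\bF)$, $X\sim\bF$, is continuous; consequently $Q(\bF,\bF)=\tfrac12$ and, by the probability integral transform, $G(\SphereD_s(X|\bF))\sim\mathrm{Unif}[0,1]$, whose variance $\tfrac1{12}$ is the source of the constant in the statement.

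\textbf{Step 2: the oracle Mann–Whitney statistic.} Write $a_i=\SphereD_s(X_i|\bF)$, $b_j=\SphereD_s(Y_j|\bF)$, $\tilde a_i=\SphereD_s(X_i|\bF_n)$, $\tilde b_j=\SphereD_s(Y_j|\bF_n)$, so that $Q(\bF_n,\bF'_m)=\tfrac1{nm}\sum_{i,j}\mathds{1}\{\tilde a_i\le\tilde b_j\}$, and introduce the oracle statistic
\[ U_{n,m}\eqdef\frac1{nm}\sum_{i=1}^n\sum_{j=1}^m\mathds{1}\{a_i\le b_j\}. \]
Under the null $\{a_i\}_{i\le n}$ and $\{b_j\}_{j\le m}$ are i.i.d.\ from the continuous law $G$, so $U_{n,m}$ is the ordinary two–sample Mann–Whitney statistic; its Hájek projection is $\tfrac12+\tfrac1n\sum_i\big(\tfrac12-G(a_i)\big)+\tfrac1m\sum_j\big(G(b_j)-\tfrac12\big)$, the two empirical means being independent, centred, and by the CLT asymptotically $\mathcal{N}(0,\tfrac1{12n})$ and $\mathcal{N}(0,\tfrac1{12m})$, while the degenerate remainder is $o_P((n+m)^{-1/2})$ by the standard degenerate $U$–statistic variance bound. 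This gives $\big(\tfrac1{12}(\tfrac1n+\tfrac1m)\big)^{-1/2}(U_{n,m}-\tfrac12)\to\mathcal{N}(0,1)$ in distribution.

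\textbf{Step 3: the estimation remainder (the hard part).} It then remains to prove
\[ \Big(\tfrac1{12}\big(\tfrac1n+\tfrac1m\big)\Big)^{-1/2}\big(Q(\bF_n,\bF'_m)-U_{n,m}\big)\xrightarrow{\ \mathbb{P}\ }0, \]
after which Slutsky's theorem together with $Q(\bF,\bF)=\tfrac12$ concludes. Set $\delta_n\eqdef\sup_{z\in\mathrm{supp}(\bF)}|\SphereD_s(z|\bF_n)-\SphereD_s(z|\bF)|$, so $\delta_n=o(1)$ a.s.\ by (A2) and $\mathbb{E}\delta_n=O(\log n/\sqrt n)$ by (A3). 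A pair $(i,j)$ contributes to $Q(\bF_n,\bF'_m)-U_{n,m}$ only when the orders of $(a_i,b_j)$ and of $(\tilde a_i,\tilde b_j)$ disagree, which forces $|a_i-b_j|\le2\delta_n$. The main obstacle is that a naïve union bound over such order–flipped pairs only yields a remainder of size $\delta_n=O_P(\log n/\sqrt n)$, which does \emph{not} vanish after multiplication by the $\Theta(\sqrt n)$ normalisation; one must instead exploit the cancellation between "$+1$'' and "$-1$'' flips through the $U$–statistic/Hájek decomposition. Concretely, I would integrate out one sample and reduce the conditional bias $\mathbb{E}[\,Q(\bF_n,\bF'_m)-U_{n,m}\mid X_{1:n}\,]$ to a difference of two \emph{centred} empirical–process increments — using the deterministic identity $\tfrac1n\sum_i\widehat G_n(\tilde a_i)=\tfrac12+\tfrac1{2n}$ and its analogue for $\{a_i\}$ — which combined with the anti–concentration bound (A1) and the rates (A2)–(A3) is shown to be $o_P((n+m)^{-1/2})$. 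This is precisely the step that mirrors the technical core of \citet{zuo2006limiting} — while legitimately relaxing the suprema from $\bbR^d$ to $\mathrm{supp}(\bF)$, since every sample point lies in $\mathrm{supp}(\bF)$ almost surely — and it is where the sharp consistency rates established for $\SphereD_s$ in Theorem~\ref{thm:main} and Proposition~\ref{prop:A3} are genuinely indispensable.
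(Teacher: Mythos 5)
Your proposal is correct and follows essentially the same route as the paper: your split of $Q(\bF_n,\bF'_m)-Q(\bF,\bF)$ into an oracle Mann--Whitney statistic (handled by the H\'ajek projection/CLT, yielding the $\tfrac{1}{12}(\tfrac1n+\tfrac1m)$ variance via the probability integral transform) plus an estimation remainder is precisely the paper's integral decomposition from \citet{zuo2006limiting}, with the same verification of (A1)--(A3) through Theorem~\ref{thm:main} and Proposition~\ref{prop:A3} and the same relaxation of the suprema from $\bbR^d$ to $\mathrm{supp}(\bF)$. Like the paper, you correctly flag the remainder bound as the crux and defer it to the Zuo--He cancellation argument (the paper's modified claim (iii'), which shows the extra $\log n$ factor coming from (A3) is harmless), so the two proofs coincide in substance.
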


    Therefore one can design a test for the null hypothesis that two sets of samples comes from the same distribution by computing the quality index with $\SphereD_s$ and comparing it to $1/2$, since for a reasonably high number of samples, one can make use of the quantiles of the normal distribution to design a threshold for the test.

        
        We carry out the same experiment of \citet{shi2023two} albeit we use truncated distributions to be able to use our theoretical results. The distributions are all in two dimensions. 
        The first distribution is multivariate t distribution with degrees of freedom 2, mean $[0,0]$, scale matrix $I_2$, while the second is one with degrees of freedom 3, mean $[0,0]$, scale matrix $I_2+0.6\tilde{I}_2$, where $\tilde{I}_2= [[0,1],[1,0]]$; the samples with norm higher than 10000 are truncated. We draw 1000 times two sets of $n=100,200,...,500$ samples. We display both $Q(\bF,\bG)$ and $Q(\bG,\bF)$ 
        for Mahalanobis depth (MD) and the Sphere depth (SD) on Figure~\ref{fig:htest}.


                \begin{figure}[h]
                     \centering
                     \includegraphics[width=\linewidth]{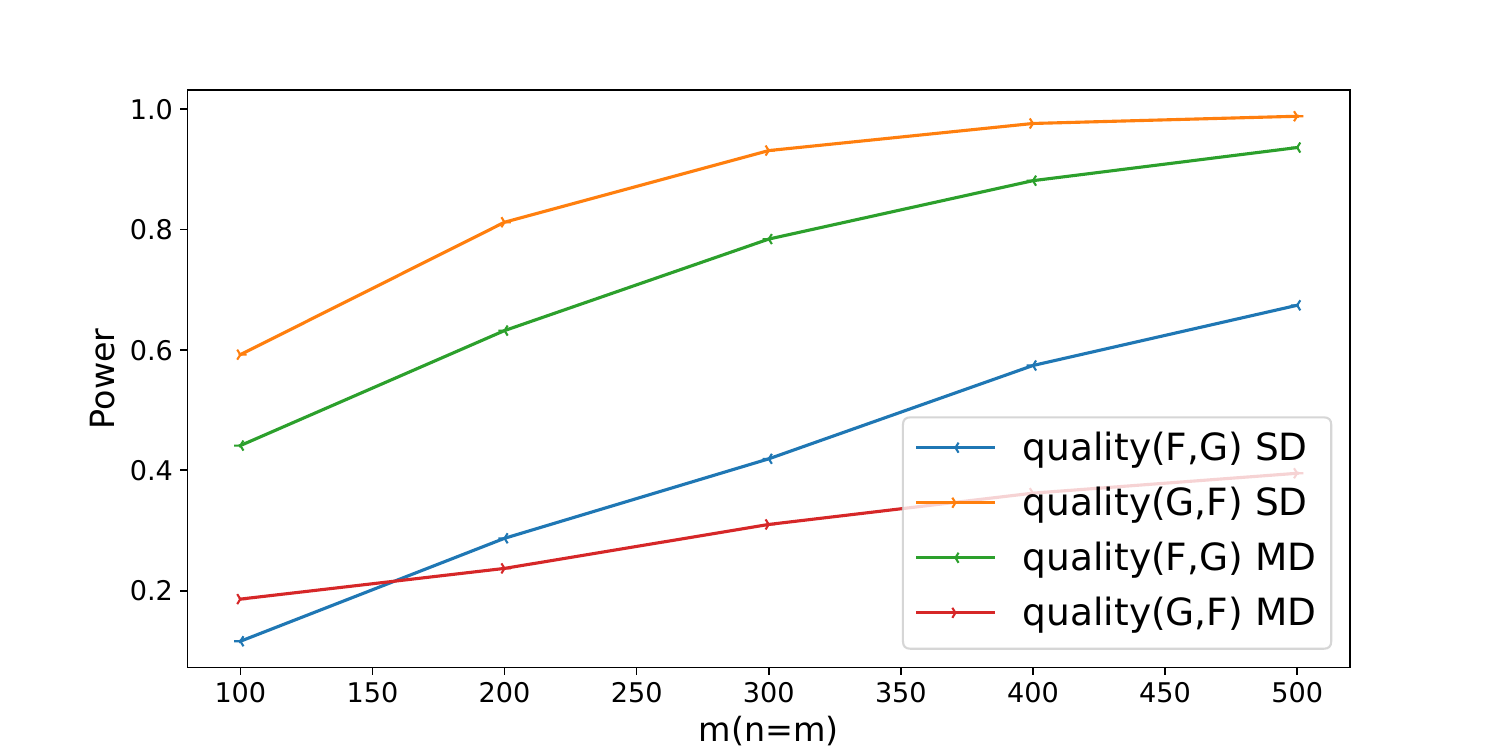}
                         \caption{Power for homogeneity test between different multivariate t distribution truncated at 10000, 1000 repetitions}
                         \label{fig:teststudent}
                \label{fig:htest}
                \end{figure}

    \subsection{Anomaly detection on real data}\label{subsec:anomaly}
    Using several real datasets different datasets of the ODDS database ~\citep{Rayana:2016}, we perform unsupervised anomaly detection by computing an outlier score proportional to the probability of a data point being an outlier. For data depth functions, such a score can be given by 1 - depth.  Using this score and the ground truth labels, we compute the Area Under the Receiving Operator Curve (AUROC or AUC-ROC).  Results are displayed in Table~\ref{tab:auroc}.
    We also compute the h-measure~\citep{hand2009measuring,hand2014better} of the scores in the Appendix, section~\ref{app:exp}.
 We compare the performance of the proposed depth to that of halfspace-depth (HD), one class SVM (OCSVM), Localized spatial depth (LSPD) and Local outlier factor (LOF)~\citep{breunig2000lof}. The idea behind (LOF) is to compute a local ``reachability" term defined using the distance between the points and its $k$ neighbours, and then compare this quantity with the ones of its neighbours to see if it differs or not. For the parameter $s$ of $\SphereD_s$, we use the standard deviation of the data multiplied by the dimension $d$ and a radius $r$ set to the standard deviation of the data. For all the competitors however, we display their best performance over a grid of hyperparameters. For LOF, we select the best $k \in [5, 10, 15, 20, 30]$. For kernel based methods (OC-SVM and LSPD) we used a Gaussian kernel with a hyperparameter $\gamma$ in a logarithmic scale grid of 20 values within $[0.001, 1000] * \frac{1}{\bbV(X) d}$ (and similarly for parameter $h$ in LSPD). We set the contamination rate $\nu$ for OC-SVM and LOF to their default value. 

Table~\ref{tab:auroc} shows that even though the hyperparameters $s$ and $r$ were not optimized, the proposed depth $\SphereD_s$ still manages to be on top for 5 datasets while remanining competitive on the others.

\begin{table}[!htb]
\centering
\caption{AUC-ROC on anomaly detection datasets}\label{tab:auroc} 
\begin{tabular}{lrrrrr}
\toprule
 & OCSVM & LSPD & LOF & HD & SD  \\
 \midrule
wine & 0.80 & 0.58 & 0.94 & 0.50 & \textbf{0.96}  \\
glass & \textbf{0.87} & 0.50 & 0.78 & 0.61 & 0.77  \\
vertebral & \textbf{0.70} & 0.60 & 0.48 & 0.52 & 0.50  \\
vowels & 0.83 & 0.82 & \textbf{0.97} & 0.56 & 0.75  \\
pima & 0.61 & 0.66 & \textbf{0.67} & 0.58 & 0.66  \\
breastw & \textbf{0.94} & 0.82 & 0.59 & 0.86 & \textbf{0.94}  \\
lympho & 0.87 & 0.50 & \textbf{0.89} & 0.50 & 0.87 \\
thyroid & \textbf{0.98} & 0.98 & 0.77 & 0.86 & \textbf{0.98}  \\
annthyroid & \textbf{0.86} & 0.85 & 0.51 & 0.61 & 0.79 \\
pendigits & 0.79 & 0.59 & 0.54 & 0.59 & \textbf{0.81} \\
cardio & 0.80 & 0.50 & 0.39 & 0.36 & \textbf{0.84} \\
\bottomrule
\end{tabular}
\end{table}

\section{Conclusion}
Inspired by kernel methods, we introduced the sphere depth, a new data depth for tackling multimodal distributions, which is the first data depth, to our knowledge, that makes proper use of the gradient descent algorithm, achieving fast optimization. We proved that our data depth verifies some general depth properties, as well as consistency and concentration bounds, that allow us to use it in homogeneity testing. Simulations confirm the geometric intuition behind the depth that multimodal distributions can be handled, and experiment on real data shows a very competitive performance on anomaly detection. 

\setlength{\itemindent}{-\leftmargin}
\makeatletter\renewcommand{\@biblabel}[1]{}\makeatother
\bibliographystyle{apalike}
\bibliography{arxiv_ref}

\onecolumn
\appendix 

\section{REMINDER}\label{app:reminder}
\subsection{Data depth}
Given a probability distribution $\bX$ in $\cP(\bbR^d)$ with the corresponding random vector $X$ distributed as $\bX$ (i.e., $X \sim \bX$), and an arbitrary point $z\in\bbR^d$, statistical data depth function constitutes the following mapping
\begin{equation*}
    D\,:\,\bbR^d \times \cP(\bbR^d) \rightarrow \bbR\,,\,(z|\bX)\,\mapsto\,[0,1]\,,
\end{equation*}
which satisfies the conditions of:
\begin{itemize}
    \item \emph{Affine invariance}: $D(A z + b|A X + b) = D(z|\bX)$ for any $b\in\bbR^d$ and any non-degenerate $d \times d$ matrix $A$.
    \item \emph{Monotonicity}: For any $z^*\in\bbR^d$ having maximal depth, i.e., $\bbR^d\ni z^*\in\argmax_{ z\in\bbR^d} D(z|\bX)$, for any $r$ such that $\|r\|=1$, the function $\beta\mapsto D(z^* + \beta r|\bX)$ is non-increasing for $\beta>0$.
    \item \emph{Vanishing at infinity}: $\lim_{\|z\| \rightarrow\infty}=0$.
    \item \emph{Upper-semicontinuity}: The depth-trimmed regions, i.e., upper-level sets of the depth function $D_\alpha(\bX) = \{z\in\bbR^d\,:\,D(z|\bX)\ge\alpha\}$ are closed for all $\alpha\in[0,1]$.
\end{itemize}
This set of conditions, often called postulates, being taken from (one of) the (most) recent survey(s) by \cite{MoslerM22}, takes its origins in~\cite{dyckerhoff2004data} and~\cite{mosler2013depth}. A slightly different, but equivalent, set of postulates can be found in~\cite{ZuoS00a}. The reader is further referred to the surveys by~\cite{LiuPS99} and~\cite{Cascos09} for more details on the nature of statistical data depth function, as well as referenced works therein.

\subsection{Kernel methods}
    Kernel methods \citep{scholkopf2002learning} allow to tackle nonlinear relationship between data by leveraging linear tools in well chosen infinite dimensional spaces. They rely on the key notion of Reproducing Kernel Hilbert Space \citep{aronszajn1950theory, berlinet2011reproducing} and the famous kernel trick. For a positive definite symmetric kernel $k: \cX \times \cX \to \bbR$ defined over some 
    non-empty set $\cX$ (here, in this work, $\cX =\bbR^d$), Aronszajn's theorem says that there is a unique Hilbert space $\cH$ equipped with an inner product $\langle \cdot, \cdot \rangle_{\cH}$, for which $k(\cdot, x) \in \cH$ and $k$ is its reproducing kernel, that is:
   \begin{equation*}
       \forall f \in \mathcal{H},\forall x \in X, \langle f, k(\cdot,x)\rangle_\mathcal{H} = f(x).
   \end{equation*}
   In the remainder of the paper, we denote $\varphi: \cX \to \cH$ the canonical feature map defined by: $\varphi(x):=k(\cdot,x) \in \cH$.
   This defines what is called a kernel inner product on the space $X$:
       \begin{equation*}
           k(x,y) = \langle \varphi(x), \varphi(y) \rangle_\mathcal{H}
       \end{equation*}
    In practice, an explicit formula often allows to compute the kernel inner product without going through the infinite dimensional RKHS - this is called the {\it kernel trick}.
\section{PROOFS}\label{app:proofs}

\subsection{Proofs for section~\ref{subsec:depth-geom-prop}}

\newtheorem*{prop:depth-geom-properties}{Proposition \ref{prop:depth-geom-properties}}
\begin{prop:depth-geom-properties}
        Let $z\in\bbR^d$ and $\bX$ a probability distribution in  $\cP(\bbR^d)$ and $X \sim \bX$. For any $s \geq 0$:
        \begin{itemize}
            \item (i)  $\SphereD_s(z|\bX)\in [0,1]$ and vanishes at infinity:\\ $\SphereD_s(z|\bX)\to 0$ when $||z||\rightarrow +\infty$,
            \item (ii) $\SphereD_s(z|\bX)$ is invariant by any global isometry $I$ of $\bbR^d$: $\SphereD_s(I(z)|I(\bX))=\SphereD_s(z|\bX)$.
        \end{itemize}
\end{prop:depth-geom-properties}

    \begin{proof}

        \begin{itemize}
            \item (i): $sig_s$ has values in $[0,1]$ even when $s \to 0$, therefore the depth is in $[0,1]$.
            For any $\varepsilon$, pick some $t$ big enough such that $P_X(||x||\leq t)\geq 1-\varepsilon$. Then for some big constant $M$, assume $z$ verifies $||z||>t+r+M$, we have in particular for any $c\in \sphere(z,r)$ that $||c||>t+M$. Therefore by the triangular inequality, for any $x$ such that $||x||\leq t$, $||x-c||>M$. If we define $\eta = sig_s(r^2-M^2)$, then $sig_s(r^2-||x-c||^2)<\eta$.  Therefore, we have :
            \begin{equation*}
                 \bbE \left[ sig_s\left(r^2 - \|X - c\|^2\right)\right] < (1-\varepsilon)\eta + \varepsilon.
            \end{equation*}
            As this is true for any $c\in \sphere(z,r)$, by taking the infinum $\SphereD_s (z|\bX) < (1-\varepsilon)\eta + \varepsilon$. When $||z|| \to \infty$, we can take $t$ and $M$, arbitrarly big, therefore $\eta$ and $\varepsilon$ can be arbitrarly small and so, since the depth is positive $\SphereD_s (z|\bX) \to 0$.
            \item (iii): Let $I$ be some global isometry of $\mathbb{R}^d$. We will first prove that
            $\SphereD_s (I(z)|I(\bX)) \leq \SphereD_s (z|\bX)$. For any $c' \in \sphere(I(z),r), \exists!c\in \sphere(z,r), I(c)=c'$ by bijection and isometry. Since $||I(x)-c'||=||x-c||$ by isometry, noting $f_{z,c}:x\to sig_s(r^2-||x-c||^2)$ as in section~\ref{sec:properties}, we get $f_{z,c'}(I(x))=f_{z,c}(x)$.
            
            Optimising over all possible $c'$ passing to the infinum gives $\SphereD_s (I(z)|I(\bX)) \leq \SphereD_s (z|\bX)$. By applying the same reasoning with $I^{-1}$, we prove the equality $\SphereD_s (I(z)|I(\bX)) = \SphereD_s (z|\bX)$.
        \end{itemize}\qed
    \end{proof}

\newtheorem*{prop:scaling}{Proposition \ref{prop:scaling}}
    \begin{prop:scaling}
    For any $z\in\bbR^d$ and $\bX$ a probability distribution, $\lambda, s>0$:
    \begin{align*}
        SD^{\lambda r}_0 (\lambda z | \lambda X) = \SphereD_{0} (z|\bX)\\
        SD^{\lambda r}_s (\lambda z | \lambda X) = SD^{r}_{\frac{s}{\lambda^2}} (z|\bX)
    \end{align*}
    \end{prop:scaling}

    \begin{proof}
         We emphasised in our statement that for $s=0$, only $r$ has to be changed as we used an indicator function on the distances, only the inequality matters rather than the values of the distances. We prove now an equivalent formulation, for $s\geq0,r>0$:
         \begin{equation*}
             SD^{\lambda r}_{\lambda^2s} (\lambda z | \lambda X) = SD^{r}_{s} (z|\bX)
         \end{equation*}

         Remark that $S(\lambda z,\lambda r)=\lambda S(z,r)$. Noting $c'=\lambda c,x'=\lambda x$ for some $c\in S(z,r)$, $x \in \mathbb{R}^d$:
         \begin{align*}
            sig_{\lambda^2s}((\lambda r)^2 - ||x'-c'||) &= sig_{\lambda^2s}(\lambda^2 (r^2- ||x-c||^2))\\
                                            &= \frac{1}{1+e-\frac{\lambda^2(r^2- ||x-c||^2)}{\lambda^2s}}\\
                                            &= sig_s(\lambda^2 (r^2- ||x-c||^2))
        \end{align*}
    We conclude by passing to the infinum over all possible $c$.\qed
    \end{proof}

\newtheorem*{prop:SD-less-than-HD}{Proposition \ref{prop:SD-less-than-HD}}
    \begin{prop:SD-less-than-HD}
        For any $z\in\bbR^d$ and any $r > 0$:
        \begin{equation*}
            \SphereD_0 (z|\bX) \leq \TukeyD (z|\bX)
        \end{equation*}
    \end{prop:SD-less-than-HD}

    \begin{proof}
        We use the characterisation of the depth $\SphereD_0 (z|\bX)$ by looking at the density inside of a ball of \eqnref{eq:ball-form} by Proposition~\ref{prop:sphere}. 
        For some $u$ belonging to the unit sphere, let $H_{u,z}$ be some halfplane defined by the equation:
        \begin{equation}\label{eq:halfplane-u}
            \langle x, u\rangle - \langle z,u \rangle \geq 0,
        \end{equation}
        whose hyperplane therefore contains $z$ and is orthogonal to $u$.
        There exists $c \in \sphere(z,r)$ verifiying the equation $c=z+ru$, 
        and by multiplying \eqnref{eq:halfplane-u} by $r>0$, we get the equivalent equation:
        \begin{equation}\label{eq:halfplane-c}
            \langle x, c-z\rangle - \langle z,c-z \rangle \geq 0,
        \end{equation}
        We prove that for $x \in \bbB(c,r)$, $x$ verifies \eqnref{eq:halfplane-c}. Without loss of generality, we translate everything by $-z$ to assume $z=0$. Then \eqnref{eq:halfplane-c} becomes:
        \begin{equation*}
            \langle x, c\rangle \geq 0
        \end{equation*}
        and $x \in \bbB(c,r)$ is equivalent to:
        \begin{align*}
            &||c-x||^2\leq||c-z||^2=||c||^2 \\
            \iff &||c||^2+||x||^2-2\langle x,c \rangle \leq ||c||^2 \\
            \iff &||x||^2 \leq 2\langle x,c \rangle
        \end{align*}
        therefore $x$ must belong to $H_{u,z}$: 
        since $\ball(c,r)\subseteq H_{u,z}$, $\mathbb P_\bX(\bbB(c,r))\leq \mathbb P_\bX(H_{z,c})$, we get the result by passing to the infinum. \qed
    \end{proof}

\subsection{Proofs for section~\ref{sec:properties}}

\newtheorem*{prop:lip}{Proposition \ref{prop:lip}}
    \begin{prop:lip}
        The functions $f_{z,c}$ of $\mathcal{F}_z$ are Lipschitz (with respect to $x$).
        Symmetrically, they are also Lipschitz with respect to $c$.
    \end{prop:lip}

    \begin{proof}
        We do so by bounding the gradient of $f_{z,c}$
        \begin{align}
            \nabla_x f_{z,c}(x) = \nabla_x(r^2-||c-x||^2) sig_s'(r^2-||c-x||^2) \label{eq:nabla}.
        \end{align}
        Note that the derivative of the sigmoid is:
        \begin{equation*}
            sig_s'(x)=\frac{1}{s} sig_1(x) (1-sig_1(x)),
        \end{equation*}
        so \eqnref{eq:nabla} becomes:
        \begin{equation*}
            \nabla_x f_{z,c}(x) = \frac{\nabla_x(r^2-||c-x||^2)}{s} sig_1(r^2-||c-x||^2) (1-sig_1(r^2-||c-x||^2)).
        \end{equation*}
        We have $\nabla_x(r^2-||c-x||^2) = 2(x-c)$, therefore:
            \begin{align*}
                ||\nabla_x f_{z,c}(x)|| = \frac{2||x-c||}{s} sig_1(r^2-||c-x||^2) (1-sig_1(r^2-||c-x||^2)).
            \end{align*}
        This expression is bounded by some constant $L$ as it is continuous and for $||x-c||\to\infty,||\nabla_x f_{z,c}(x)||\to 0$, since $(1-sig_1(r^2-||c-x||^2)=\frac{e^{-(r^2-||c-x||^2)}}{1+{e^{-(r^2-||c-x||^2)}}}$ and the exponential decrease dominates. Therefore $f_{z,c}$ is L-Lipschitz, where $L=\sup_{x\in\bbR^d} ||\nabla_x f_{z,c}(x)||$. By symmetry of the role of $c$ and $x$ that only appears in the distance $||c-x||$, by swapping the roles of the two, the same Lipschitzness is true with respect to $c$.\qed
    \end{proof}

Proposition~\ref{prop:bracketnum} makes use of Propositon~\ref{prop:lip}, as well as of Lemma~\ref{lem:ballcover} that we fully restate here:

\newtheorem*{lem:ballcover}{Lemma \ref{lem:ballcover}}
    \begin{lem:ballcover}
        The covering number of the Euclidian unit ball $B$ in dimension d is of order:
        $N(\varepsilon,B,||\cdot||_2) = \Theta\left(\frac{1}{\varepsilon^d}\right)$ for $0<\varepsilon <1$ (for $\varepsilon\geq 1$, one element is sufficient to cover the ball).
    \end{lem:ballcover}
    
In \citet{vershynin2018high}(Corollary 4.2.11) in particular, it is stated that, for any $\varepsilon>0$:
\begin{equation}\label{eq:vershynin-ball}
    \left(\frac{1}{\varepsilon}\right)^d \leq N(\varepsilon,B,||\cdot||_2)  \leq \left(\frac{2}{\varepsilon}+1\right)^d .
\end{equation}
By dilating everything by $r>0$, we get that $N(\varepsilon r,\ball(0,r),||\cdot||_2)=N(\varepsilon,\ball(0,1),||\cdot||_2)$ or in other words $N(\varepsilon,\ball(0,r),||\cdot||_2)=N(\varepsilon/r,\ball(0,1),||\cdot||_2)$.

Before proving Proposition~\ref{prop:bracketnum}, we will also need this theorem:
\begin{theorem}~\cite[Th 2.7.11 reformulated]{van1996weak}\label{thm:van-lip}

   For $\mathcal{F}=\{f_c:c\in C\}$ defined for a metric space $(C,d)$, and $F$ some (envelope) function, if:
    \begin{align*}
        \forall f_c,f_{c'}\in \cF, \forall x, |f_c(x)-f_{c'}(x)| \leq d(c,c')F(x)
    \end{align*}
    then, for any norm $||\cdot||$:
    \begin{align*}
        N_{[]}(2||F||\varepsilon,\mathcal{F},||\cdot||)\leq N(\varepsilon,C,d).
    \end{align*}
\end{theorem}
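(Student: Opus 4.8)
The plan is to construct an explicit bracketing of $\mathcal{F}$ directly out of a minimal covering of the index set $(C,d)$, transporting that covering through the Lipschitz envelope hypothesis. First I would fix $\varepsilon > 0$ and take a minimal $\varepsilon$-covering of $C$, say with centers $c_1, \dots, c_N$ where $N = N(\varepsilon, C, d)$, so that every $c \in C$ satisfies $d(c, c_i) \leq \varepsilon$ for at least one index $i$.

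Next, for each center $c_i$ I would define a candidate bracket $[l_i, u_i]$ by setting $l_i = f_{c_i} - \varepsilon F$ and $u_i = f_{c_i} + \varepsilon F$. The key verification is that these $N$ brackets cover all of $\mathcal{F}$: given any $f_c \in \mathcal{F}$, select the index $i$ with $d(c, c_i) \leq \varepsilon$; then the hypothesis $|f_c(x) - f_{c_i}(x)| \leq d(c, c_i)\, F(x) \leq \varepsilon F(x)$ (using $F \geq 0$, as is standard for an envelope) rearranges to $l_i(x) \leq f_c(x) \leq u_i(x)$ for every $x$, so indeed $f_c \in [l_i, u_i]$.

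Finally I would bound the width of each bracket: since $u_i - l_i = 2\varepsilon F$ pointwise, positive homogeneity of the norm gives $\|u_i - l_i\| = 2\varepsilon \|F\|$. This exhibits $N$ brackets each of width at most $2\|F\|\varepsilon$, so by the very definition of the bracketing number one concludes $N_{[]}(2\|F\|\varepsilon, \mathcal{F}, \|\cdot\|) \leq N = N(\varepsilon, C, d)$, which is the claimed inequality.

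The main point requiring care is not a genuine obstacle but a matter of admissibility: the bracket endpoints $l_i, u_i$ must themselves lie in the ambient function space so that the norm $\|u_i - l_i\|$ is well defined and finite, which amounts to requiring $F$ measurable with $\|F\| < \infty$ (implicit, since the statement is vacuous otherwise). It is also worth emphasizing that the envelope bound is assumed to hold for all $x$ simultaneously, and it is precisely this pointwise control that makes the bracketing inequalities $l_i \leq f_c \leq u_i$ hold everywhere; no further uniformity argument is needed beyond choosing the nearest covering center.
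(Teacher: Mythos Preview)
Your argument is correct and is exactly the standard proof of this result: build brackets $[f_{c_i}-\varepsilon F,\,f_{c_i}+\varepsilon F]$ from a minimal $\varepsilon$-net of $(C,d)$ and check their width. The paper itself does not give a proof of this theorem---it is simply quoted from \cite{van1996weak} (Theorem~2.7.11) as a tool for bounding $N_{[]}(\varepsilon,\mathcal{F}_z,L_1(P))$ in Proposition~\ref{prop:bracketnum}---so there is nothing further to compare.
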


Now we can prove:
\newtheorem*{prop:bracketnum}{Proposition \ref{prop:bracketnum}}
    \begin{prop:bracketnum}
    The bracketing number of $\cF_z$ is bounded: $N_{[]}(\varepsilon,\mathcal{F}_z,L_1(P)) <\infty$ for any $\varepsilon>0$. In particular, for $\varepsilon$ sufficiently small, 
        $N_{[]}(\varepsilon,\mathcal{F}_z,L_1(P)) \leq O\left((\frac{r}{\varepsilon})^d\right)$, otherwise 
        $N_{[]}(\varepsilon,\mathcal{F}_z,L_1(P)) = O(1)$.
    \end{prop:bracketnum}

    \begin{proof}
    By Proposition~\ref{prop:lip}, any $f_{z,c}\in\cF_z$ is $L$-Lipschitz, therefore we can apply Theorem~\ref{thm:van-lip} taking the constant function $F:x\to L$ as envelope to relate the bracketing number of $\cF_z$ to the covering number of $\sphere(z,r)$:
        \begin{align*}
            N_{[]}(\varepsilon,\mathcal{F}_z,L_1(P))&\leq N(\varepsilon/2L,\sphere(z,r),||.||_1)\\
            &\leq N(\varepsilon/2L,\sphere(z,r),||.||_2)\\
            &\leq N(\varepsilon/2L,\ball(0,r),||.||_2)\\
            &= N(\varepsilon/2Lr,\ball(0,1),||.||_2)\\.
        \end{align*}
    We conclude using Lemma~\ref{lem:ballcover} that 
    $N_{[]}(\varepsilon,\mathcal{F}_z,L_1(P))= \Theta \left((\frac{r}{\varepsilon})^d\right)$ for $0<\varepsilon<1$ and $N_{[]}(\varepsilon,\mathcal{F}_z,L_1(P))= \Theta (1)$ for higher $\varepsilon$.\qed
    \end{proof}

Now from this result follows :

\newtheorem*{prop:rademacher-fz}{Proposition \ref{prop:rademacher-fz}}
    \begin{prop:rademacher-fz}
    The Rademacher complexity of $\cF_z$ with respect to some distribution $P$ is of order:
    \begin{equation*}
        R_{P,n}(\mathcal{F}_z) = O\left(\sqrt{\frac{d}{n}}\right)
    \end{equation*}
    \end{prop:rademacher-fz}

    \begin{proof}
    By Theorem~\ref{thm:dudley} (Dudley), we get a bound using covering numbers.
    The same reasoning as previous using Theorem~\ref{thm:van-lip} is valid for $L_2$ and enables to relate to the covering of the ball.
    \begin{align*}
    R_{P,n}(\mathcal{F}_z)
        &\leq \int_0^{\infty} \sqrt{\frac{\log N(\varepsilon,\mathcal{F}_z,L_2(P))}{n}}d\varepsilon\\
        &\leq \int_0^{\infty} \sqrt{\frac{\log N_{[]}(2\varepsilon,\mathcal{F}_z,L_2(P))}{n}}d\varepsilon \text{  (cf. \citep{van1996weak}  p.84)}\\
        & \leq \int_0^{\infty} \sqrt{\frac{\log N(2\varepsilon/Lr,\ball(0,1),||.||_1)}{n}}d\varepsilon \\
        & \leq \int_0^{\infty} \sqrt{\frac{\log N(2\varepsilon/Lr,\ball(0,1),||.||_2)}{n}}d\varepsilon \\
        &=  \int_0^{Lr/2} \sqrt{\frac{\log N(2\varepsilon/Lr,\ball(0,1),||.||_2)}{n}}d\varepsilon +  \int_{Lr/2}^{\infty} \sqrt{\frac{\log N(2\varepsilon/Lr,\ball(0,1),||.||_2)}{n}}d\varepsilon
    \end{align*}  
    The second integral where $2\varepsilon/Lr \geq 1$ leads to $0$ ($\log(1)$ since $N(2\varepsilon/Lr,\ball(0,1),||.||_2)=1$). Also, for the first integral, we know using eq.\ref{eq:vershynin-ball}, that for $0<\varepsilon'<1$:
    \begin{equation*}
        N(\varepsilon',B,||\cdot||_2)  \leq \left(\frac{3}{\varepsilon'}\right)^d.
    \end{equation*}
    Therefore: 
    \begin{align*}
        R_{P,n}(\mathcal{F}_z) &\leq \int_0^{Lr/2} \sqrt{\frac{\log (3Lr/2\varepsilon)^d}{n}}d\varepsilon\\
        &= O\left(\sqrt{\frac{d}{n}} \right)
    \end{align*}\qed
    \end{proof}

\newtheorem*{prop:lipschitz-depth}{Proposition \ref{prop:lipschitz-depth}}
    \begin{prop:lipschitz-depth}
        The Sphere depth $z \to \SphereD_s(z|\bX)$ is Lipschitz with respect to $z$.
    \end{prop:lipschitz-depth}  

    \begin{proof}
        Let $z\in\bbR^d$, any $c\in\sphere(z,r)$ can be rewritten $c=z+ru$ where $u\in\sphere(0,1)$. Consider another point $z'\in\bbR^d$, keeping the same $u$ translates into a point $c'\in\sphere(z',r)$ with $c'=z'+ru$, such that $||c'-c||=||z'-z||$. Writing $f_{z,c}:x\to sig(r^2-||z+ru-x||^2)$ we can define the function $g_u : (z,x) \to  sig(r^2-||z+ru-x||^2)$ and notice by the same idea as in the proof of Proposition~\ref{prop:lip}, 
        that $g_u$ is Lipschitz in $z$ for some constant $L'$.
        Therefore $|f_{z,c}(x)-f_{z',c'}(x)|=|g_u(z,x)-g_u(z',x)|\leq L' ||z-z'||$.
        So 
        \begin{equation*}
            \forall c\in\sphere(z,r)\exists c'\in\sphere(z',r), f_{z',c'}(x) \leq f_{z,c}(x) + L' ||z-z'||, 
        \end{equation*}         
        therefore by passing to the infinum and the expectation $\SphereD(z'|\bX) \leq \SphereD(z'|\bX) + L' ||z-z'||$, and vice-versa, so 
        $|\SphereD(z'|\bX) - \SphereD(z|\bX)|\leq L' ||z-z'||$ 
        and the Lipschitzness is proven.\qed
    \end{proof}

This proposition with Proposition~\ref{prop:bounded-rademacher}, that we restate here as well, give us Corollary~\ref{cor:bounded-set}.
\newtheorem*{prop:bounded-rademacher}{Proposition \ref{prop:bounded-rademacher}}
    \begin{prop:bounded-rademacher}
        For $\mathcal{F}$ a space of functions that are bounded by $b$ (in $||\cdot||_{\infty}$)
        \begin{equation*}
            \bbP(\sup_{f\in \mathcal{F}} \lvert \mathbb{E}_{\bX}[f(X)]- \mathbb{E}_{X_n}[f(X)] \rvert>\Radem_{\bX,n}(\mathcal{F})+t)\leq 2 e^{-\frac{nt^2}{2b^2}}
        \end{equation*}
    \end{prop:bounded-rademacher}

Proposition~\ref{prop:bounded-rademacher} is a classic result that can be proved using the fact that:
\begin{equation*}
    \bbE_{X_{1:n}} \left[ \sup_{f\in \mathcal{F}} \lvert \mathbb{E}_{\bX}[f(X)]- \frac{1}{n}\sum f(x_i)| \right] \leq 2 \Radem_{\bX,n}(\mathcal{F})
\end{equation*}
and that the function $\phi : (x_1,...,x_n) \to \sup_{f\in \mathcal{F}} \lvert \mathbb{E}_{\bX}[f(X)]- \frac{1}{n}\sum f(x_i)|$ changes by at most $b/n$ when changing one $x_i$, therefore the result can be concluded using McDiarmid's inequality~\citep{mcdiarmid1989method}.

\newtheorem*{cor:bounded-set}{Corollary \ref{cor:bounded-set}}
    \begin{cor:bounded-set}
        For a bounded set $K \subset \mathbb{R}^d$, $\bX$ absolutely continuous:
        \begin{align*}
            \bbP(\sup_{z\in K}|\SphereD_s(z|X_n)-\SphereD_s(z|\bX)|>R_{P_X,n}(\mathcal{F}_z)+t+2L\varepsilon)
            \leq N(\varepsilon,K,||\cdot||_2)2 e^{-\frac{nt^2}{2}} 
        \end{align*}
        where $L$ is the constant of Lipschitzness of Proposition~\ref{prop:lipschitz-depth}.
    \end{cor:bounded-set}    

\begin{proof}
    Since the sigmoid is bounded by $1$ and:
    \begin{equation}\label{eq:depth2GK}
        |\SphereD(z|\samples)- \SphereD(z|\bX)| 
                        \leq \sup_{f\in \cF_z} \lvert \bbE_\bX[f(X)]- \bbE_{\samples}[f(X)] \rvert,
    \end{equation}
    Proposition~\ref{prop:bounded-rademacher} gives us, for $z\in\bbR^d$, the point-wise depth concentration:
    \begin{equation}\label{eq:pointwise-concentration}
        \bbP(|\SphereD(z|\samples)- \SphereD(z|\bX)|t>\Radem_{\bX,n}(\mathcal{F})+t)\leq 2 e^{-\frac{nt^2}{2}}
    \end{equation}
    For $\varepsilon>0$, pick a $\varepsilon$-cover $\cC$ of $K$ with $N(\varepsilon,K,||\cdot||_2)$ elements (a finite number since $K$ is bounded).
    \begin{equation*}
        \forall z \in K, \exists z'\in\cC, \bigg\vert |\SphereD(z|\samples)- \SphereD(z|\bX)|-|\SphereD(z'|\samples)- \SphereD(z'|\bX)|\bigg\vert \leq 2L\epsilon
    \end{equation*}
    where $L$ is the constant of Lipschitzness of Proposition~\ref{prop:lipschitz-depth}.
    Applying \eqnref{eq:pointwise-concentration} to all the elements of the cover with a union bound gives the result.\qed
\end{proof}

Now we have enough results to prove our main one:

\newtheorem*{thm:main}{Theorem \ref{thm:main}}
    \begin{thm:main}
    For any $r, s > 0$, the sphere depth $\SphereD_s$ verifies:
        \begin{equation}
        \tag{\ref{eq:consistency}}
             \underset{n\to+\infty} {\lim} |\SphereD_s(z|\samples)- \SphereD_s(z|\bX)| = 0
        \end{equation}
    and:
        \begin{equation}
                \tag{\ref{eq:concentration}}
            \bbP(|\SphereD_s(z|\samples)-\SphereD_s(z|\bX)|> R_{\bP_{\bX},n}(\cF_z)+t)\leq 2 e^{-\frac{nt^2}{2}}
        \end{equation}
    Moreover, if $\bX$ is absolutely continuous with a bounded support then for all $\varepsilon>0$:
        \begin{equation}
        \tag{\ref{eq:consistency_support}}
             \underset{n\to+\infty}{\lim} \bbP(\underset{z\in supp(\bX)}{\sup}|\SphereD_s(z|\samples)- \SphereD_s(z|\bX)| > \varepsilon) = 0
        \end{equation}
    \end{thm:main}

\begin{proof}
    We prove each of the three equations one by one.
    
    Thanks to \eqnref{eq:depth2GK}, the point-wise consistency (\eqnref{eq:consistency}) can be proven by showing the Glivenko-Cantelli property of $\cF_z$:
    \begin{equation*}
        \underset{n\to+\infty} {\lim}\sup_{f\in \cF_z} \lvert \bbE_\bX[f(X)]- \bbE_{\samples}[f(X)] \rvert =0
    \end{equation*}
    This is achieved directly by combining the results of Theorem~\ref{thm:GC} and Proposition~\ref{prop:bracketnum}, hence the consistence.
    The result of \eqnref{eq:concentration} has been showed already just above as \eqnref{eq:pointwise-concentration} in the proof of Corollary~\ref{cor:bounded-set}.

    Finally, to prove \eqnref{eq:consistency_support} for $\bX$ absolutely continuous, we first apply Corollary~\ref{cor:bounded-set} taking the support for $K$ and picking, let's say $\epsilon=1/n$:
    \begin{align*}
            \bbP(\sup_{z\in supp(\bX)}|\SphereD_s(z|X_n)-\SphereD_s(z|\bX)|>R_{P_X,n}(\mathcal{F}_z)+t+\frac{2L}{n})
            \leq N(1/n,supp(\bX),||\cdot||_2)2 e^{-\frac{nt^2}{2}} \\
            \leq \Theta(n^d)2 e^{-\frac{nt^2}{2}}
        \end{align*}
    Now for $\varepsilon'>0$, since by Proposition~\ref{prop:rademacher-fz} $R_{P,n}(\mathcal{F}_z) = O\left(\sqrt{\frac{d}{n}}\right)$, it tends to zero as $n$ goes to infinity, and $\frac{2L}{n}$ as well, therefore there exists $M>0$ such that for all $n>M$, $\frac{2L}{n}, R_{P,n}(\mathcal{F}_z) <\varepsilon'/3$. Picking $t=\varepsilon'/3$, we obtain:
    \begin{align*}
            \bbP(\sup_{z\in supp(\bX)}|\SphereD_s(z|X_n)-\SphereD_s(z|\bX)|>\epsilon')
            &\leq \bbP(\sup_{z\in supp(\bX)}|\SphereD_s(z|X_n)-\SphereD_s(z|\bX)|>R_{P_X,n}(\mathcal{F}_z)+t+\frac{2L}{n})\\
            &\leq \Theta(n^d)2 e^{-\frac{n(\varepsilon'/3)^2}{2}}
        \end{align*}
    and this quantity goes to zero as $n$ tends to infinity, which concludes the proof.\qed
    
\end{proof}

\subsection{Proofs for section~\ref{subsec:homogeneity-tests}}

\newtheorem*{prop:A3}{Proposition \ref{prop:A3}}
    \begin{prop:A3}
        For $\bX$ absolutely continuous with bounded support:
        \begin{equation*}
            \mathbb{E}(\sup_{supp(\bX)}|\SphereD_s(z|\samples)-\SphereD_s(z|\bX))|)= O\left( \frac{\log(\sqrt{n})}{\sqrt{n}} \right)
        \end{equation*}
    \end{prop:A3}

    \begin{proof}
        For some $\varepsilon>0$ that we will express later, pick an $\varepsilon$-cover $\cC$ of the support of $X$: we denote the elements of the cover $\cC=\{z_1,...,z_N\}$ where $N$ is short for $N(\varepsilon,supp(X),||\cdot||)$. Then we can approximate by $L\varepsilon$ (where $L$ is the Lipschitz constant of Proposition~\ref{prop:lipschitz-depth}) the depth of any point of the support by the depth of the closest $z_i$. If we define $U_i=(|\SphereD(z_i|\samples)-\SphereD(z_i|\bX)|-R_{P_X,n}(\mathcal{F}_z))_+$ (where $(X)_+=max(X,0)$), by applying \eqnref{eq:concentration} of Theorem~\ref{thm:main} to each $z_i$, the $(U_i)$ are subgaussian of rate $1/n$.
        Therefore:
        \begin{equation*}
            \mathbb{E} \max_{i=1,...,n} U_i = \sqrt{2\log(N)/n}
        \end{equation*}
        We deduce, by taking $\varepsilon = \frac{1}{\sqrt{n}}$,        
        \begin{align*}
            \mathbb{E} \sup_{supp(X)}|\SphereD(z|X_n)-\SphereD(z|X)| &\leq  2L\varepsilon + \mathbb{E} \max_{i=1,...,n} U_i +R_{P_X,n}(\mathcal{F}_z) \\
            &= O\left( \frac{\log(\sqrt{n})}{\sqrt{n}} \right)                       
        \end{align*}
        because then $\log N \leq \log ((3\sqrt{n})^d) = \frac{d}{2}\log n + d \log 3$ and $R_{P_X,n}(\mathcal{F}_z)= O\left(\sqrt{\frac{d}{n}}\right)$ by Proposition~\ref{prop:rademacher-fz}. \qed
    \end{proof}

\newtheorem*{th:homogeneity-test}{Theorem\ref{th:homogeneity-test}}
    \begin{th:homogeneity-test}
        Let there be two sets of $n$ and $m$ independent samples respectively coming from a distribution $\bF$ absolutely continuous with bounded support and verifying (A1), each sets of samples with empirical distribution $\bF_n$ and $\bF'_m$ respectively. Then the quality index using $\SphereD_s$ verifies:
        \begin{equation*}
            \left(\frac{1}{12}(\frac{1}{n}+\frac{1}{m})\right)^{-1/2}(Q(\bF_n,\bF'_m)-Q(\bF,\bF))\to \mathcal{N}(0,1)
        \end{equation*}
         in distribution as the number of samples $n,m$ goes to infinity.
    \end{th:homogeneity-test}

    In the domain of such tests, such as in the works of \citet{liu1993quality} and \citet{zuo2006limiting}, when considering using a depth $D$ in such test procedure, it is also implicitly assumed that $D(X|\bF)$ for $X \sim \bF$ is absolutely continuous, otherwise if there is some value $c$ such that the event $D(X|\bF)=c$ is not of measure null but has a certain positive probability, then we cannot guarantee $Q(F,F)=1/2$ ( a trivial counterample would be an absurd constant depth or a Dirac distribution that leads to $Q(F,F)=1$), which we require in practice to test. For the Sphere depth, with an absolutely continuous distribution $F$, with the continuity of the sigmoid, the assumption seems reasonable in practice. Here in the proof we will also implicitly assume $D(X|\bF_n)$ and $D(X|\bF'_m)$ have absolutely continuous values for $X\sim F$, otherwise we need the assumption (A4) of the work of \citet{zuo2006limiting}:
    (A4) $\mathbb{E}(\sum_i p_{iX}(F_m)p_{iY}(F_m)=o(m^{-1/2}))$ if there exist $c_i$ such that $p_iX(F_m)>0$ and $p_iX(F_m)>0$ for $p_{iZ}:=\bbP(D(Z|F_m)=c_i|F_m),i=1,2,...$.

    \begin{proof}
        We follow the proof of \citet{zuo2006limiting}. Following their notations, for a depth $D$ and distributions $\bH,\bF_1,\bF_2$ in $\bbR^d$ ,we note:
        \begin{align*}
            I(x,y,\bH) = \mathds 1_{\{D(x|\bH)\leq D(y|\bH)\}}\\
            I(x,y,\bF_1,\bF_2) = I(x,y,\bF_1)-I(x,y,\bF_2)
        \end{align*}

    They first prove their Lemma 1, that we restate here for completeness:
\newtheorem*{lemma1zuo}{Lemma}    
    \begin{lemma1zuo}
    For $\bF_m$ and $\bG_n$ empirical distributions based on independent samples of sizes $m$ and $n$ from distributions $\bF$ and $\bG$ respectively:
    \begin{itemize}
        \item (i) $\int\int I(x,y,\bF)d(\bG_n(y)-\bG(y))d(\bF_m(x)-\bF(x)) = O(1/\sqrt{mn})$ in probability
        \item (ii) $\int\int I(x,y,\bF_m,\bF)d(\bF_m(x)-\bF(x))d\bG(y) = O(1/\sqrt{m})$ in probablity under (A1)-(A2'), and
        \item (iii) $\int\int I(x,y,\bF_m,\bF)d\bF_m(x)d(\bG_n-\bG)(y) = O(m^{-1/4}n^{-1/2})$ in probability under (A1) and (A3')
    \end{itemize}
        where (A1) is the same as ours and :
    \begin{itemize}
        \item (A2') $\sup_{x\in\mathbb{R}^d} |D(z|\samples)-D(z|\bX)|=o(1)$ almost surely as $n\to\infty$
        \item (A3') $\mathbb{E}(\sup_{supp(X)}|D(z|\bX)-D(z|\bX)|)=O\left(\frac{1}{\sqrt{n}} \right)$     
    \end{itemize}
        
    \end{lemma1zuo}
    Since the integrals only involves terms $x$ and $y$ drawn from distribution $\bF,\bG$, we argue that we can restrain the supremum only to the support instead of all $\bbR^d$. More in details, they use the supremum in their proof when claiming:
    \begin{equation*}
        |I(x,y,\bF_m,\bF)|\leq \mathds 1_{\{ | D(x|\bF)-D(y|\bF)| \leq 2 \sup_{x\in \bbR^d}|( D(x|\bF_m)-D(x|\bF)|\}}.
    \end{equation*}
    But in our case, we assume $\bF=\bG$ and their empirical counterparts have their support included in the population support, it is enough to have:
    \begin{equation*}
        \forall x,y\in supp(\bF), |I(x,y,\bF_m,\bF)|\leq \mathds 1_{\{ | D(x|\bF)-D(y|\bF)| \leq 2 \sup_{x\in supp(\bF)}|( D(x|\bF_m)-D(x|\bF)|\}}.
    \end{equation*}
    Then they proceed to prove that the square of the integral in (iii) is bounded by:
    \begin{equation*}
        \frac{4C}{n}\bbE \left( \sup_{x\in \bbR^d} |( D(x|\bF_m)-D(x|\bF)| \right) = O(1/(m^{1/2} n)).
    \end{equation*}
    Instead, because of our (A3), we will bound by 
    \begin{equation*}
        \frac{4C}{n}\bbE \left( \sup_{x\in supp(\bF)} |( D(x|\bF_m)-D(x|\bF)| \right) = O(log(n)/(m^{1/2} n)).
    \end{equation*}
    and replace claim (iii) (with (A3')) by  claim (iii') (with our (A3)):

    (iii') $\int\int I(x,y,\bF_m,\bF)d\bF_m(x)d(\bG_n-\bG)(y) = O(m^{-1/4}\log(n)^{1/2}n^{-1/2})$ in probability under (A1) and (A3)

    Since $\log(n)$ is still dominated by $n^{1/2}$ and the authors are interested in having negligible terms with respect to $m$ in their proof, it does not change the rest of their proof and their results follow analogously. To give a gist of their proof with more details, they rewrite:
    \begin{align*}
        Q(\bF_m,\bG_n)-Q(\bF,\bG_n) &= \int\int I(x,y,\bF_m,\bF) d\bF_m(x) d\bG_n(y) \\ 
        &+\int\int I(x,y,\bF)d(\bG_n(y)-\bG(y))d(\bF_m(x)-\bF(x))\\
        &+\int\int I(x,y,\bF)d\bG(y)d(\bF_m(x)-\bF(x))
    \end{align*}
    They use their Lemma 1 (iii) to show that the first integral is equivalent to $\int\int I(x,y,\bF_m,\bF)d\bF(x)d\bG(y) +o_p(1/\sqrt{m})$. Then since we assume $\bF=\bG$, because we also assumed the absolute continuity of the depth values, this term is worth $1/2-1/2=0$, or alternatively, assuming (A4), \citet{zuo2006limiting} showed it is a negligible $o(m^{-1/2})$. The rest is proven using their Lemma 1 (i) and the central limit theorem, concluding by rewriting:
    \begin{equation*}
        Q(\bF_m,\bG_n)-Q(\bF,\bG) = (Q(\bF_m,\bG_n)-Q(F,\bG_n))+(Q(F,\bG_n)-Q(\bF,\bG)).
    \end{equation*}\qed
    \end{proof}
    
\section{EXPERIMENTAL RESULTS}\label{app:exp}


    Experiments were carried out on a Macbook Pro 2020 (2,3 GHz Intel Core i7 processor with 4 cores, RAM of 16 Go 3733 MHz LPDDR4X).    

    \subsection{Kendall-$\tau$}
    Here we display the results of the same experiments as section~\ref{subsec:simulated-ranks} in the two figures below, but using the Kendall $\tau$ rank correlation coefficient instead of Spearman's.
                \begin{figure}[!h]\label{fig:kendalltau}
                \begin{subfigure}{0.475\textwidth}
        \centering
        \includegraphics[width=\linewidth]{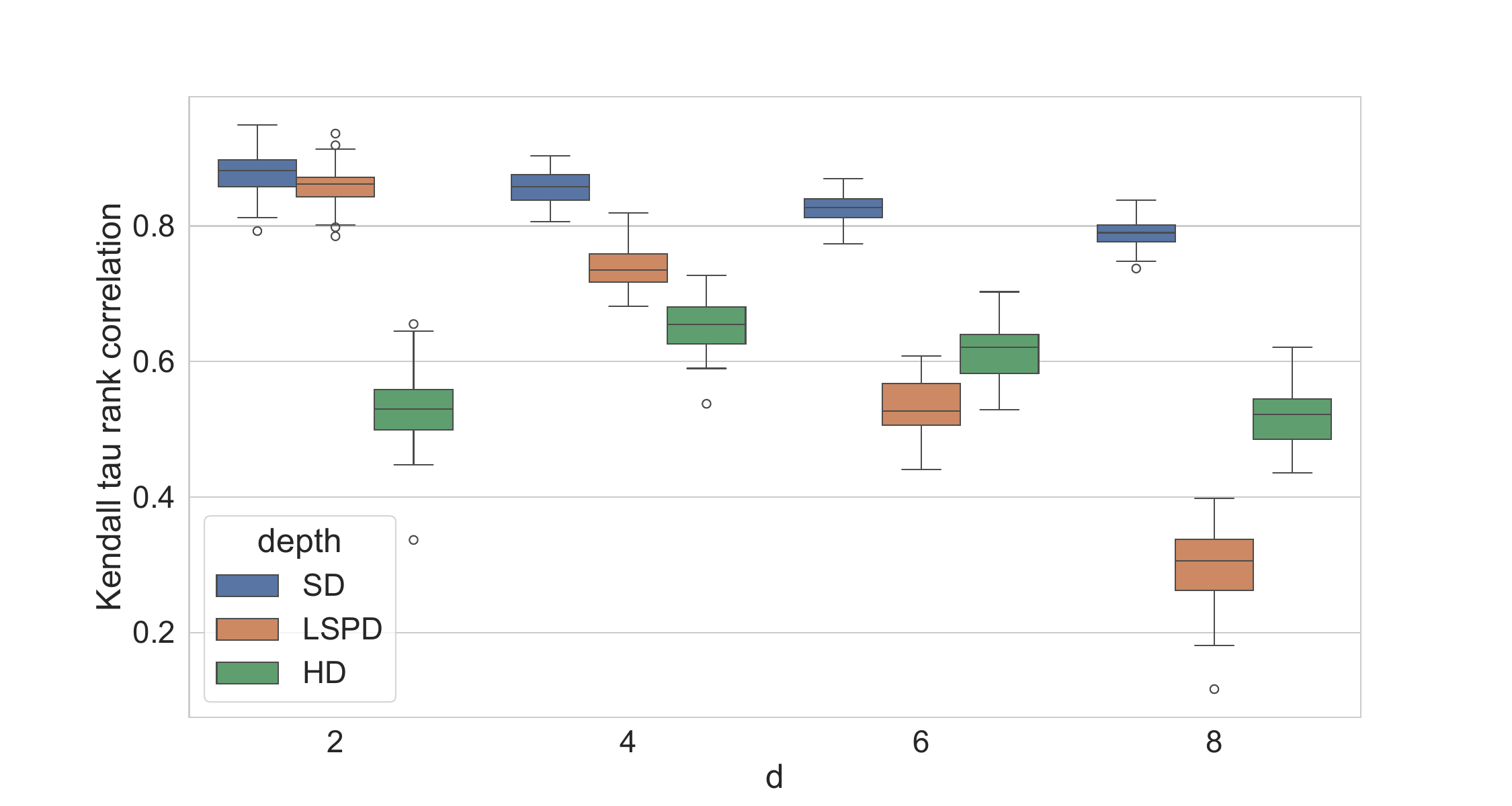}
                         \caption{Bi-Gaussian $SD^1_1$ ($r,s=1$) vs LSPD ($h=1$) Kendall tau correlation ranking w.r.t. true density (50 runs)}
                     \label{fig:dgrowing-kendall}
                \end{subfigure}
        \hfill
                \begin{subfigure}{0.475\textwidth}
        \centering
        \includegraphics[width=\linewidth]{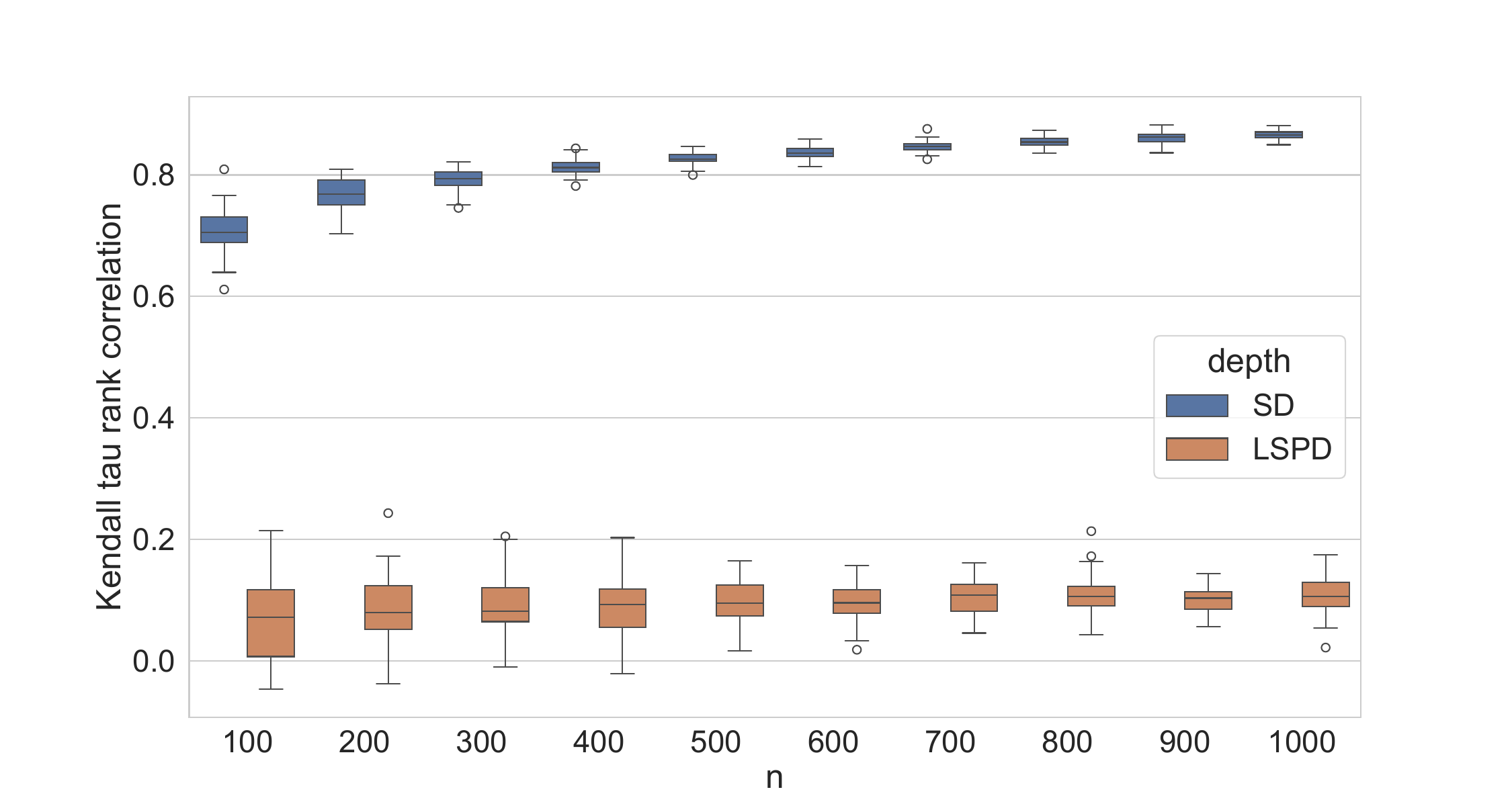}
                         \caption{Convergence of the Kendall tau rank correlation w.r.t. the true density for a bi-Gaussian distribution, d = 10}
                         \label{fig:ngrowing-kendall}
                \end{subfigure}      
                \end{figure}

    \subsection{H-score}
    Here we display the results of the same experiments of anomaly detection of section~\ref{subsec:simulated-ranks} but using the $h$-measure instead of the AUROC. The $h$-measure~\citet{hand2009measuring,hand2014better} is supposed to be better than the AUROC to compare different kinds of algorithms, as it assumes some fixed distribution on the ratio of the costs of false positives and false negatives, and consequently deduces threshold values, while the AUROC by integrating under the curve implicitly integrates on several thresholds corresponding to different cost ratios for each algorithm.
    In Table~\ref{tab:hmeasure}, we show the $h$-scores of the experiments, as well as details on the datasets such as the number of samples, dimension, and anomaly rate.

\begin{table} [h]
\centering
    \caption{$h$-measure on anomaly detection datasets}\label{tab:hmeasure}
    \begin{tabular}{l|rrr|rrrrr}
    \hline
Dataset &     n &   d & Anomaly rate & {OCSVM} & {LOF} & {HD} & {LSPD} & {SD} \\
\hline
wine & 129 &  13 &     0.077519 & 0.37 & 0.74 & 0.00 & 0.10 & \textbf{0.84} \\
glass & 214 &   9 &     0.042056 & \textbf{0.61} & 0.35 & 0.05 & 0.00 & 0.40 \\
vertebral & 240 &   6 &        0.125 & \textbf{0.15} & 0.02 & 0.04 & 0.12 & 0.04 \\
vowels & 1456 &  12 &     0.034341 & 0.46 & \textbf{0.79} & 0.03 & 0.38 & 0.20 \\
pima &  768 &   8 &     0.348958 & 0.07 & \textbf{0.12} & 0.03 & \textbf{0.12} & \textbf{0.12} \\
breastw &  683 &   9 &     0.349927 & 0.75 & 0.20 & 0.47 & 0.51 & \textbf{0.84} \\
lympho &   148 &  18 &     0.040541 & 0.51 & \textbf{0.57} & 0.00 & 0.01 & 0.51 \\
thyroid & 3772 &   6 &     0.024655 & \textbf{0.85} & 0.25 & 0.38 & 0.84 & \textbf{0.85} \\
annthyroid &  7200 &   6 &     0.074167 & \textbf{0.39} & 0.02 & 0.07 & \textbf{0.39} & 0.35 \\
pendigits &  6870 &  16 &     0.022707 & 0.25 & 0.05 & 0.03 & 0.06 & \textbf{0.39} \\
cardio &  1831 &  21 &     0.096122 & \textbf{0.47} & 0.04 & 0.00 & 0.00 & 0.46 \\
\hline
\end{tabular}
\end{table}

\section{A NOTE ON COMPLEXITY OF THE ALGORITHM}
The input data take space $O(nd)$, and the additionally used buffer is of size $O(d)$.
Each step of the gradient descent needs to compute the expectation on $n$ samples, and takes time $O(nd)$.
Therefore the complexity of Algorithm~\ref{alg:grad-ssd} in the worst case is of $O(n d n_{iter})$.

\newpage


\end{document}